\algnewcommand{\LineComment}[1]{\State \(\triangleright\) #1}
 \newcommand{\jonas}[1]{{\textcolor{blue}{Jonas: #1}}}
 \newcommand{\niklas}[1]{{\textcolor{orange}{Niklas: #1}}}
  \newcommand{\hector}[1]{{\textcolor{red}{Hector: #1}}}
\newtheorem{theorem}{Theorem}
\newtheorem{definition}{Definition}
\newcommand{\Omit}[1]{}
\newcommand{\tup}[1]{\langle #1 \rangle}
\newcommand{\citeay}[1]{\citeauthor{#1} (\citeyear{#1})}
\newcommand{\sift}{\textnormal{\textsc{sift}}\xspace}
\newcommand{\synth}{\textnormal{\textsc{synth}}\xspace}
\newcommand{\strips}{\textnormal{\textsc{strips}}\xspace}
\newcommand{\stripsp}{\textnormal{\textsc{strips+}}\xspace}
\newcommand{\navig}{\textsc{navig}\xspace}
\newcommand{\puzzle}{\textsc{sliding-tile puzzle}\xspace}
\newcommand{\down}{\textsc{down}}
\newcommand{\up}{\textsc{up}}
\renewcommand{\left}{\textsc{left}}
\renewcommand{\right}{\textsc{right}}
\title{Learning Lifted Action Models From Traces of Incomplete Actions and States}
\author{
Niklas Jansen \and
Jonas Gösgens \and
Hector Geffner \\
\affiliations
RWTH Aachen University\\
  {\emails \{niklas.jansen, jonas.goesgens, hector.geffner\}@ml.rwth-aachen.de}\\
}
\begin{document}

\maketitle
\begin{abstract}
  Consider the problem of learning a lifted \strips model of the sliding-tile puzzle
  from random state-action traces where the states represent the location of the tiles only,
  and the actions are the labels up, down, left, and right, with no arguments.
  Two challenges are involved in this  problem.
  First, the states are not full \strips states,   as some predicates are  missing, like
  the atoms representing the  position of the  ``blank''.   Second, the actions are not full  \strips either,
  as they do not  reveal all the  objects  involved in   the actions  effects and preconditions.
  Previous approaches have addressed different
  versions of this model learning problem,  but most  assume  that
  actions in the traces are full \strips actions or that the domain predicates
  are all observable. The  new setting  considered in this work is more ``realistic'', as 
  the atoms observed convey the state of the world but  not full \strips states, and
  the actions reveal the arguments  needed for selecting the action but
  not the ones needed for modeling it in \strips.   For formulating and  addressing
  the  learning problem,  we introduce a variant of \strips,   which we call \stripsp, 
  where certain  \strips action arguments can be   left implicit in   preconditions which  can also
  involve a limited form of existential  quantification. The learning problem  becomes  the problem
  of learning \stripsp models   from \stripsp state-action traces. For this, the  proposed learning algorithm, called
  \synth,   constructs  a stratified sequence (conjunction) of precondition expressions or ``queries'' for each action, 
  that denote  unique objects  in the state  and ground the implicit action arguments in \stripsp.
  The correctness and   completeness of \synth  is established, and its   scalability is tested
  on state-action traces obtained from  \stripsp models derived from existing \strips domains.
\end{abstract}

\section{Introduction}


The problem of learning action models from data is fundamental in  both planning and reinforcement learning. 
In classical planning, lifted  models are  learned from observed  traces that may contain
action and state information from a  hidden \strips domain
\cite{yang2007learning,arora2018review,aineto2019learning,balyo:2024}, 
while in  model-based reinforcement learning 
action models  are learned from similar traces but without making assumptions about the structure of the hidden
domain  \cite{sutton:book,brafman:rmax}. As a result, model-based RL approaches have been successfully used in non-\strips domains like the
Atari games \cite{fleuret:atari,dreamer:atari,timofte:atari} but the models learned are not lifted nor   transparent.

In this paper, we aim to start exploring the middle ground between model-learning approaches in classical planning
and  reinforcement learning. A key step for this is to drop the assumption that the observed actions or states in the input
traces  come from a hidden \strips domain. This is because such types of traces make unrealistic assumptions about the information
that the  learning agent can actually perceive. Consider  for example the   domains \navig and \puzzle. 
In the first, an agent can move in  a  grid, one unit at a time;  in the other,  the ``blank
tile''  can move in a grid in  the same way.  The observed actions in the two domains can be  described uniquely
in terms of the four labels \up, \down, \left, and \right, yet these labels do not represent full \strips actions.
The reason is that actions in \strips are forced to include as arguments \emph{all the objects
involved in atoms that change}. This means that in \navig, the actions 
must  include the current and next location of the agent as arguments,
while in the \puzzle, they must include the tile that is moved, along with its current
and new location. 

While these  examples illustrate that  it is not reasonable to assume that the observed traces contain full \strips actions, 
a similar argument can be made about  full \strips states. Indeed, \strips\ encodings of the  \puzzle need to include a predicate
tracking the position of the ``blank'', but this predicate is not needed to represent the state of the world which is given by
the positions of the tiles alone.

An alternative to traces with full \strips actions and/or full \strips states is to make both actions and states
\emph{partially observable} \cite{paolo:2025}. Yet without futher restrictions, the learning problem is not well defined and
may   have no general solution.
Our goals in this paper are more modest but they can be understood from this perspective as well.
We consider a specific version of this general learning problem where certain \strips action arguments
and predicates  are fully observable, while the other action arguments and predicates are not observable at all.
Yet  from these partial observations we will be able to uncover the lifted \strips model in full.

For formulating and  addressing   the resulting model learning problem we introduce a variant of the \strips language,
which we call \stripsp,   where certain  \strips action arguments can be  left implicit in  preconditions
which can also  involve a limited form of existential   quantification. The learning problem becomes the problem of
learning \stripsp models   from \stripsp state-action traces, and the proposed learning algorithm will involve
the construction of a sequence of precondition expressions or ``queries'' for each action, each one built from the
observed predicates  and denoting   unique objects  in the  states where the action is executed.  The  correctness and
completeness of the learning algorithm will be  established, and its scalability will be  tested
on state-action traces obtained from  \stripsp models obtained from existing \strips domains.

The rest of the paper is structured as follows. We review  related work and the \strips language first, then
introduce the new target language for learning, \stripsp, and the model learning task and  algorithm.
Finally, we present the experimental results,   conclusions,  and  challenges.

\Omit{  
\begin{figure}[t]
\begin{minipage}[t]{0.45\columnwidth}
\centering
\begin{tikzpicture}[scale=1, every node/.style={minimum size=1cm, anchor=center}]
	
	\draw[step=1cm, thick] (0,0) grid (3,3);
	
	\begin{scope}[shift={(1.5,2.5)}, scale=0.3]
		
		\draw[fill=gray!30, thick] (-0.5,-0.5) rectangle (0.5,0.5);
		
		\fill (-0.2,0.2) circle (0.08);
		\fill (0.2,0.2) circle (0.08);
		
		\draw[fill=white] (-0.25,-0.2) rectangle (0.25,-0.3);
		
		\draw[thick] (-0.3,0.5) -- (-0.5,0.9);
		\draw[thick] (0.3,0.5) -- (0.5,0.9);
		\fill (-0.5,0.9) circle (0.08);
		\fill (0.5,0.9) circle (0.08);
		
		\draw[fill=gray!20, thick] (-0.4,-0.5) rectangle (0.4,-1.2);
		
		\draw[thick] (-0.4,-0.6) -- (-0.8,-1); 
		\draw[thick] (0.4,-0.6) -- (0.8,-1);   
		
		\draw[thick] (-0.2,-1.2) -- (-0.3,-1.7); 
		\draw[thick] (0.2,-1.2) -- (0.3,-1.7);   

	\end{scope}
	
\end{tikzpicture}

\end{minipage}
\hfill
\begin{minipage}[t]{.45\columnwidth}
\centering
\begin{tikzpicture}
	\draw[thick] (0,0) grid (3,3);
	
	\node at (0.5,2.5) {1};
	\node at (1.5,2.5) {2};
	\node at (2.5,2.5) {3};
	\node at (0.5,1.5) {4};
	\node at (1.5,1.5) {};
	\node at (2.5,1.5) {5};
	\node at (0.5,0.5) {7};
	\node at (1.5,0.5) {8};
	\node at (2.5,0.5) {6};
	
\end{tikzpicture}
\end{minipage}
\caption{navig and npuzzle instances}
\label{fig1}
\end{figure}

}

\section{Related Work}


\medskip

\noindent \textbf{Learning from states and actions.}
The problem of learning lifted \strips  models has a long history in planning. In most cases, the input 
traces combine information about  actions and states. While observability of the states can be  partial or  noisy, 
in almost all cases the observations reveal all the domain predicates and their arities
\cite{yang2007learning,mourao2012learning,zhuo2013action,arora2018review,aineto2019learning,lamanna2021online,verma2021asking,macq:muise,le2024learning,bachor2024learning,xi2024neuro,aineto2024action,icaps:learning}. Likewise, the actions  are normally   full \strips actions
with all the arguments, the exception being a recent SAT-based learning
formulation where only the action names are observed  (along with the states),
with no information about either their arguments or their arity \cite{balyo:2024}.
Interestingly, \citeay{paolo:2025} recently considered  learning action models
from state-action traces where both the states and the action arguments can be partially observable,
yet   the approach  comes with no  guarantees. Indeed, the formulation, which is based on observations
 about the atoms  affected by an action,  implies that action arguments that are just involved
 in preconditions and    are not affected by the action,  must be observable, at least sometimes;
 else such   arguments cannot be recovered.\footnote{As  a  concrete example, a correct model for an  action like  $move(p,t_1,t_2,l)$
which  moves  a package $p$ from truck $t_1$ to truck $t_2$ when  both trucks are at location $l$
and package $p$ is in truck $t_1$, cannot be learned in this approach if the action argument $l$ is not observable, 
as the location $l$ cannot  be identified from the effects of the action alone. In \synth, this is not a problem.}
%

\medskip

\noindent \textbf{Learning from actions only.}  Fewer works have considered the problem of learning lifted \strips models from traces containing actions only.
The \textsc{locm}  systems \cite{locm1,locm,locm2,locm3}  accept action traces as inputs,  and outputs lifted domain descriptions, but their properties
and scope are not clear. More recently, the \sift algorithm, which  uses the same input traces as \textsc{locm}, has been shown to be sound, complete, and
scalable \cite{sift}. The problem with \sift\ is that the actions in the traces are assumed to be full \strips actions with all the action arguments
spelled out. 
There is also a SAT approach to lifted model learning, that accepts state graphs, not traces, 
where the states are not observable  and  edges are labeled with  action names and no arguments \cite{bonet:ecai2020,ivan:kr2021}.
This approach learns  from  very sparse information, but unlike \sift and \textsc{locm},  does not scale up. 

\medskip

\noindent \textbf{Model-based reinforcement learning.}   Model-based RL algorithms learn controllers by learning (stochastic) models first,
without making further assumptions about the structure of the  models \cite{sutton:book}. In the tabular setting, they  result in  flat state models
with  state transition probabilities obtained from simple counts  \cite{brafman:rmax}. In some cases, a first-order state language  is assumed but the state predicates
are given \cite{littman:rmax,leslie:probabilistic}. In more recent approaches, the learned dynamics is not represented compactly in languages such as \strips or
PDDL, but in terms of deep neural networks.  In particular, successful model-based RL approaches for the Atari games
or Minecraft learn  the action dynamics  using transformers or recurrent neural networks \cite{fleuret:atari,dreamer:atari,timofte:atari}.
A limitation of these methods, like other recent deep-learning approaches that learn \strips models from state images \cite{asai:latplan,asai:jair},
is that the learned action models are opaque and not lifted.

\section{Background: \strips}


A classical \strips planning problem is a pair $P=\tup{D,I}$ where $D$ is a first-order
\emph{domain} and $I$ contains information about the instance \cite{geffner:book,ghallab:book}.
The domain $D$ has a set of predicate symbols $p$ and a set of (lifted)  action schemas $a(x)$ with
preconditions, add, and delete  effects $Pre(a(x))$, $Add(a(x))$, $Del(a(x))$ 
given by  atoms $p(x_1, \ldots, x_k)$, where $p$ is domain predicate
of arity $k$, and each $x_i$ is an argument of the action schema. 
The instance information is a tuple $I=\tup{O, \textit{Init},G}$ where $O$ is a set of object names or constants  $c_i$, and
$\textit{Init}$ and $G$ are sets of \emph{ground atoms} $p(c_1, \ldots, c_k)$ denoting the initial and goal situations.

An action schema $a(x)$, where $x$ is a tuple of variables $x_1, \ldots, x_{n}$,  is instantiated by consistently replacing
the variables $x_i$  by constants $c_i$ in the instance. In the typed version of \strips, the variables and the
constants have types, and variables are replaced by constant of the same type. The atoms  $p(x)$ are called fluent
if they appear in the effect of some action, and else they are called static. Static atoms    appear   in
the initial situation and action preconditions, and affect just the action groundings.

A \strips  problem $P=\tup{D,I}$ defines a state model $S(P)=\tup{S,s_0,S_G,\textit{Act},A,f}$
in compact form where the states $s \in S$ are \emph{sets of ground atoms} from $P$ (assumed to be the true
atoms in $s$), $s_0$ is the initial state $I$, $S_G$ is the set of goal states $s$ such that $G \subseteq s$,
$\textit{Act}$ is the set of ground actions in $P$, $A(s)$ is the set of ground actions
whose preconditions are (true) in $s$, and $f(a,s)$, for $a \in A(s)$, represents
the state $s'$ that follows action $a$ in the state $s$; namely $s' = ((s \setminus Del(a)) \cup Add(a))$.

By design, a  \strips action can only  affect the truth of an atom $p(x_1, \ldots, x_k)$ if the atom
is an  action effect, and hence if all of  its  arguments $x_i$ are action arguments. This implies
for example, that if an  action for picking up a block $x$ makes an atom $clear(y)$ true, then
$y$ must be an action argument.


An action sequence $\tau=a_0,a_1,\ldots,a_{n}$ is applicable in $P$ if
$a_i \in A(s_i)$ and $s_{i+1}=f(a_i,s_i)$, for $i=0,\ldots,n$. The states
$s_i$ are said to be reachable in $P$, and the action sequence $\tau$
is a plan for $P$ if $s_{n+1}$ is a goal state. 

An action trace in  $D$ is an  applicable action sequence in some instance
$P=\tup{D,I}$. Algorithms like \textsc{locm} and \sift learn  (lifted) \strips models
from  action traces alone. Others approaches learn from traces $s_0, a_0, s_1, \ldots$
that combine states and actions, with some information about actions or states missing,
or corrupted by noise.  In our setting,  \strips models will be learned from  state-action traces
where some of the action arguments and some of the predicates  in the state are not observable at all.

\section{\stripsp}

We cast the problem of learning lifted \strips models from traces of incomplete \strips actions and states into
the problem of learning  models in a language that is more succint  than \strips, that we call
\stripsp.  \stripsp extends \strips  by allowing \emph{tuples  of free variables}  $y$ and $z$ in the action schemas $a(x)$ that are not
among the explicit  action arguments $x$. The  variables in $y$   can only appear in action preconditions, 
while the variables in $z$  can appear in action preconditions and effects. The $z$ variables,
however, have to be \emph{determined} by the $x$ variables as spelled out below.
This  extension of  \strips  is  not  new and forms  part of some of  the PDDL standards \cite{pddl:book}.
In particular,  the first PDDL standard \cite{pddl} supports the $z$ variables,
which are    declared via the keyword \texttt{:vars}.

\begin{definition}
  Action schemas $a(x)$ in \stripsp have   (conjunctive)  preconditions $Pre(a(x))=\phi(x,y,z)$ with free variables
  among those of  $x$, $y$, and $z$ which are  pairwise disjoint sets of variables. The variables in $x$
  and $z$ can appear in action effects. The value of the variables in $z$ must be \emph{determined}
  by the values of the variables in $x$ as defined below.
  \label{def:determined}
\end{definition}

The variables in $x$  denote explicit action arguments of the action $a(x)$  in \stripsp, and the variables
in $z$  denote ``implicit'' action arguments captured by the preconditions. \strips actions are trivial \stripsp actions with an empty list of implicit action
arguments; while \stripsp actions   map into \strips actions with more arguments after pushing the $y$ and $z$ variables into explicit action arguments in \strips. 
For this translation and semantics to be valid, 
the value (denotation)  of the $z$ variables that can appear in the action effects,
must be \emph{uniquely determined} by the value of the explicit action arguments $x$
and the action preconditions. 

For example, in the \puzzle, the action $up(c_1,c_2,t)$ that moves tile $t$ from cell $c_2$ to cell $c_1$
can be modeled in \strips with these three arguments. In \stripsp, on the other hand,
the action $up$  can be modeled without \emph{any} arguments, as the three explicit arguments in \strips, $c_1$, $c_2$, and $t$
can be recovered from the values of the three  free $z$  variables $z=\{z_1,z_2,z_3\}$ in its precondition $\phi(x,y,z)$
given by the  formula $blank(z_1) \land above(z_1, z_2) \land at(z_3,z_2)$.
Indeed, this precondition  is satisfied in each state $s$ where the action $up$ is applicable by a unique
grounding of the $z_i$ variables, so that such variables can be regarded  as \emph{implicit arguments} of the
$up$ action.  This is all  formalized below.

\medskip

A ground \stripsp action $a(o)$ is \emph{applicable}  in a state $s$ if its precondition formula $\phi(x,y,z)$ is \emph{satisfiable} in $s$
with a grounding that binds $x$ to $o$. Formally: 

\begin{definition}
  For a \strips or \stripsp domain $D$, 
  let $\phi(x,y,z)$ refer to a conjunction of domain  atoms with  arguments from  the three disjoint variable sets
  $x, y, z$ and  let $s$ be the  initial  state  of an  instance $P=\tup{D,I}$ of $D$.\footnote{Notice that a reachable
  state of an instance $P$ is the initial state of another instance $P'$ that  is otherwise  like $P$.}
  Then,
  \begin{itemize}
  \item A grounding of  $\phi(x,y,z)$ in $P$ is an assigment $\sigma$ of variables in the formula to constants (objects) in the instance. 
  \item A grounding \emph{satisfies}  the  formula $\phi(x,y,z)$ in $s$ if the resulting ground atoms are all true in $s$.
  \item The formula $\phi(x,y,z)$ is \emph{satisfiable} in $s$ if some grounding of the variables satisfies it.
  \end{itemize}
  \end{definition}

The $z$ variables are \emph{determined}  by the variables $x$ in a \stripsp  action $a(x)$ with precondition $\phi(x,y,z)$ if
the groundings that satisfy the formula must agree on the value (grounding) of $z$ when  they agree on the value of $x$: 

\begin{definition}
  The value of the  variables in  $z$ are \emph{determined} by the values of the  variables in  $x$ in the precondition  $\phi(x,y,z)$ of an \stripsp  action $a(x)$ in
  a domain $D$,   if in the initial state of any   instance $P$ of $D$, there are no two satisfying  groundings $\sigma$ and $\sigma'$
  of $\phi(x,y,z)$   such that $\sigma(x)=\sigma'(x)$ and $\sigma(z) \not= \sigma'(z)$.
\end{definition}

In the precondition $\phi(x,y,z)$ of the action $up$ above
given by the  formula $blank(z_1) \land above(z_1, z_2) \land at(z_3,z_2)$,
the three variables $z_i$ in  $z=\{z_1,z_2,z_3\}$ are determined,
as in any legal state $s$ over the sliding-puzzle domain $D$,
there is a unique grounding  for $z_1$ (blank position),
for $z_2$ (cell above the blank), and for  $z_3$ (tile at the cell which is above the blank).

In addition to assuming that the grounding of the $z$ variables in \stripsp action
schemas $a(x)$ is uniquely determined by the grounding of the explicit action arguments $x$
and the preconditions as expressed in Definition~\ref{def:determined}, 
we assume, as in \sift,  that action   effects are ``well-formed'' in the sense that they
change the state; namely, the complement of the effects must be explicit or implicit action preconditions,
so that  no action adds an atom that is  true, or deletes an atom that is   false \cite{sift}.

\medskip

Determined variables   express  \emph{state-invariants} of the domain; namely, that in the
reachable states where a ground action $a(o)$ of the lifted action $a(x)$ 
with  precondition  $\phi(x,y,z)$ applies,  there is a unique grounding for $z$
among the (non-empty) groundings that satisfy  $\phi(x,y,z)$ with  $x=o$.
In other words, the grounding of  $z$ is  a function  $f_{a,s}(x)$ of $x$, the state $s$, and
the action instance $a=a(o)$.

The \emph{semantics} of the \stripsp action $a(x)$ with precondition $\phi(x,y,z)$
is   the semantics of the \strips action $a'(x')$ that has the same preconditions and effects as $a(x)$
but with the $y$ and $z$ variables pushed as explicit arguments in $x'$. 

\Omit{
 In particular, a ground action $a(o)$ is applicable in  $s$ iff
 some ground instance $a'(o')$ of  $a'(x')$ is applicable in $s$ with $x'$ set to $o$, 
 and the effects of $a(o)$ are the effects of any such $a(o')$. Notice that
 $y$ variables can  appear in the preconditions of  $a(x)$ but not in its  effects,
 and hence a ground instance $a(o)$ of $a(x)$ may map into many applicable instances
 of the \strips actions  $a(o')$ that have different grounded preconditions
 but the same grounded effects.\footnote{Logically, the formula $\forall x, y, z. [\phi(x,y,z) \Rightarrow \psi(x,z)]$,
 is logically equivalent to $\forall x.z [(\exists y. \phi(x,y,z)) \Rightarrow \psi(x,z)]$, and this is why
 $y$ and $z$ variables in \stripsp can be regarded implicitly quantified existentially and universally, even
 if the grounding for $z$ that satisfies the antecedent $\phi(x,y,z)$ is a function of $x$ \cite{pddl}.}
}

\medskip

\noindent \textbf{Example:} \navig. The problem of navigating in an  empty grid can be modeled in \strips  via action schemas like $up(c,c')$,  where $c$ and $c'$
are grid cell variables,   preconditions are  $at(c)$ and $above(c',c)$, and effects are $at(c')$ and $\neg at(c)$.
In \stripsp, the explicit  action arguments $c$ and $c'$ can be made implicit through the use of the  preconditions  $at(z)$ and $above(z',z)$
where the variables $z$ and $z'$ are determined, as in every state $s$, there is a single atom $at(z)$ that is true, and  a single (static) true atom
$above(z',z)$ given $z$. The result is that the action $up$ can be modeled with no (explicit) arguments in \stripsp.

\medskip

\noindent \textbf{Example:} \puzzle.
In \strips, the domain  can be modeled by  actions like $up(c,c',t)$ where $t$ is the tile that moves from cell $c'$ to cell $c$,
and preconditions  involving   the  atoms $at(t,c')$, $blank(c)$, and $above(c,c')$, where $blank$ tracks the position of the ``blank''. In \stripsp,
the three action arguments can be made implicit as shown above.


\medskip

\noindent \textbf{Example:} \textsc{Blocks.} 
In \strips,  the action $unstack(x_1,x_2)$ takes as arguments the blocks $x_1,x_2$, where $x_1$ is stacked on $x_2$.
In \stripsp, the action $unstack(x_1)$ can take  instead a single explicit action argument $x_1$, as   the variable  $x_2$
can be captured by an  implicit variable $z_1$ whose unique grounding is determined by the value of
$x_1$ and the precondition $on(x_1,z_1)$ in any state where the action $unstack(x_1)$ is applied.
The explicit argument $x_1$ cannot be rendered implicit because multiple blocks may be  potentially unstacked.
Yet the action $unstack(x_2)$, where $x_2$ denotes the block beneath the one to be picked
would be a well-formed schema too,  with the atom $on(z_1,x_2)$ in its precondition
determining  the unique  grounding of the block $z_1$ to be unstacked.

\section{The Learning Task}

The learning task is to infer a   lifted \stripsp domain   from random state-action  traces obtained from instances
over a hidden \stripsp domain.  We   state the task formally   below after introducing some restrictions on the class of  hidden \stripsp models.

\subsection{Target Fragment of \stripsp}

We cannot address the learning task  in its full generality because  determining whether a precondition formula $\phi(x,y,z)$ is
satisfiable in a state is already NP-hard. We assume  instead  a class of hidden \stripsp domains $D$
whose precondition formulas  $\phi(x,y,z)$ are \emph{easy to evaluate}.  For convenience, we will refer to
formulas $\phi(x,y,z)$ as \emph{conjunctive queries}. In these formulas the atom predicates are the domain predicates
and the arguments are  free variables from  $x$, $y$, and $z$.\\
The restrictions below limit the expressive power of the target language, but every \strips problem is  part of this \stripsp fragment, as 
every \strips problem $P=\tup{D,I}$ is a  \stripsp problem with precondition  $\phi(x,y,z)$, 
where the sets of variables $y$ and $z$ are empty.

The first restriction applies to the $y$ variables: 

\begin{definition}
 A conjunctive query $Q(x,y,z)$ is \emph{simple} if each $y_i$ variable appears only once in $Q(x,y,z)$.
\end{definition}

This means that the only constraints on $y$ variables are those occurring in single atoms, and moreover,
 no variable $y$ appears twice in such  atoms either (this last condition could   be  relaxed though).
 The second restriction, \emph{stratification}, is more interesting and applies to the $z$ variables:
 
\begin{definition}
  A conjunctive query $Q(x,y,z)$   is \emph{stratified} if it can be expressed  as the conjunction of
  conjunctive queries  $Q_1(x,y,z^1)$, \ldots, $Q_n(x,y,z^n)$, each with one or more atoms, such that:
  \begin{itemize}
   \item Each variable $z_i$ appears in $z^i$ but  not in  $z^j$,  $j < i$.
   \item If $Q(x,y,z)$ is satisfiable in a state $s$ with grounding $\sigma(x)\!=\!o$ and $\sigma(z_i)\!=\!c_i$, $i=1, \ldots, n$,
     there is no grounding $\sigma'$ satisfying the prefix  $Q_1(x,y,z^1)$, \ldots, $Q_i(x,y,z^i)$ with $\sigma'(x) \!=\!o$ and
     $\sigma'(z_j) \not=\sigma(z_j)$,  for any $1 \leq j \leq i$, $i=1, \ldots, n$, 
  \end{itemize}
  \label{def:strat}
\end{definition}

  In other words, the query $Q(x,y,z)$ is \emph{stratified}, not just if  the $z$ variables are determined by the $x$ variables through the
  $Q$-formula, but if the value of each individual  variable $z_{i+1}$ in $z$ is determined by the value of the variables $x \cup \{z_1,\ldots, z_i\}$
  in the subformula $Q_{i+1}(x,y,z^{i+1})$. In fact, the variables $z$ being determined by $x$ means that if the query is satisfiable in a state $s$, there
  is a \emph{unique  value} $z= f_{a,s}(x)$  for the $z$ variables that satisfies  the query for a given grounding of $x$, state $s$, and action instance  $a=a(o)$.
  Computing these values, however, can be computationally hard. Stratification provides conditions under which this task is easy
  and can be solved \emph{one variable $z_{i}$ at a time.}

  A domain $D$ is  said to be \emph{stratified} if the action precondition formulas $\phi(x,y,z)$ are simple and
  stratified.

\subsection{Task}

The learning task is to infer a lifted \stripsp domain  $D_L$  from random state-action  traces obtained from  instances $P=\tup{D,I}$
of a hidden \stripsp domain $D$.  The learned domain $D_L$ does not have to be syntactically equivalent to the hidden domain $D$ but
has to be semantically equivalent,  meaning that the traces from  $P=\tup{D,I}$ must be  traces of  $P_L = \tup{D_L,I}$ and vice versa.

\begin{definition}[Learning task]
  Given state-action traces of the form $s_0,a_0,s_1,a_2,\ldots, s_n,a_n$ from \stripsp instances $P=\tup{D,I}$ of a 
  hidden \emph{stratified} domain $D$, the task is to learn a domain $D_L$ such that the instances $P'=\tup{D_L,I}$
  generate the same traces as $P$. 
\end{definition}

Thus, the domain $D_L$ is to be learned from traces obtained from instances $P=\tup{D,I}$ of a stratified domain $D$,
and $D_L$ and $D$ are deemed equivalent if  instances $P_i=\tup{D,I_i}$ and  $P'_i=\tup{D_L,I_i}$ generate the same
traces. 


\section{The Learning Algorithm}

We address  the learning task in three parts: learning the  preconditions   $Q(x,y,z)$ of the actions $a(x)$
that bind the  $z$ variables  uniquely, learning the extra preconditions $Q'(x,y,z)$ that use these bindings
but which do not constraint further  their  values, and  learning the action effects.

\subsection{Learning the Binding Preconditions $Q(x,y,z)$}
    The assumption that hidden  action  preconditions  $\phi(x,y,z)$ 
    are  simple and  stratified suggests  a simple  algorithm for learning a  formula equivalent to  $\phi(x,y,z)$
    from the traces  in two parts: a  ``conjunctive query''   $Q(x,y,z)$ that  binds the values of the $z$ variables, 
    and a second part $Q'(x,y,z)$ that uses such bindings. 
    The   query  $Q(x,y,z)$ is made up itself  of  conjunctive subqueries   $Q_1(x,y,z^1)$, \ldots, $Q_n(x,y,z^n)$ such that:
    \begin{enumerate}
    \item  \textbf{Stratification}: The formulas $Q_1(x,y,z^1)$, \ldots, $Q_n(x,y,z^n)$ stratify $Q(x,y,z)$  (Definition~\ref{def:strat}).
    \item \textbf{Validity}: If an  action $a(o)$ applies in a state $s$ in the traces,   $Q(x,y,z)$ must be  satisfiable with $x=o$  in $s$. 
    \item  \textbf{Maximality}: $n$ is maximal; i.e., no   other determined variables can be pushed into $z$, and no two variables $z^i$ can be merged into one (i.e.,
         in some action application they denote different objects).
    \end{enumerate}

        The maximality condition is needed so that   the task of learning  action preconditions and effects can be decoupled. Indeed, we
    learn first preconditions, and then, from them,  the action effects. These  effects  may use some of the $z$ variables ``found'' in the first
    step, but not necessarily all of them.
    The query $Q(x,y,z)$ provides  a \emph{referring expression} for each of the variables $z_i$, which  selects a unique value (grounding)
    for $z_i$ in any state $s$ where a ground action $a(o)$ applies. Due to the stratification, the implicit  value for $z_i$ in $a(o)$
    may depend on the values   of the variables $z_j$ preceding $z_i$ in the ordering, but these values are also determined by the binding $x=o$ for $a(x)$ in $s$.
    In this process,  expressions that refer to the same constant in all the states  drawn from the  same instance are discarded,
    as they do not truly denote functions of the state (for example, ``the top-left corner cell in a grid'').
    
    \begin{algorithm}[t]
      \caption{\textsc{test}: checks for unique grounding of $z_{i+1}$ in $Q_{i+1}(x,y,z^{i+1})$ over  all $s$ where $a(o)$ applies, given unique grounding $\sigma$ for
        $z_1, \ldots, z_i$ in $Q(x,y,z^i)$}
    	\label{alg:check_query}
    	\begin{algorithmic}
    		\State \textbf{Input:}  $Q(x,y,z^i) = \bigwedge_{j=1}^{i} Q_j(x,y,z^j)$ \Comment{Valid Query $Q$}
    		\State \textbf{Input:}  $Q_{i+1}(x,y,z^{i+1})$ \Comment{Extension of $Q$}
    		\State \textbf{Input:} AS $ = \{(a(o), s)\}$  \Comment{Relevant state-action pairs}
    		\\
    		\Function{test}{$Q(x,y,z^i), Q_{i+1}(x,y,z^{i+1}),$ AS}
    		
    		\State $Q(x,y,z^{i+1})\gets Q(x,y,z^i)\land Q_{i+1}(x,y,z^{i+1})$  
    		\State unique $\gets$ true
    		\State $V = \{x_1,...,x_n,z_1,...,z_i\}$  \Comment{Variables in $x$, $z^i$}
    		\For{$(a(o), s) \in$ AS}
    		\LineComment{\textit{get $\sigma$ for $z^i$ from $Q(x,y,z^i)$ in $s$ and $x=o$}}
    		\State $\Sigma \gets \text{get\_assignments}(Q(x,y,z^{i+1}), a(o), s,\sigma)$		
    		\If{$|\Sigma| = 0$}
    		\Return \emph{Not-Valid}
    		\ElsIf {$|\Sigma| \geq 2$} unique $\gets$ false
       		\EndIf
       		\LineComment{\textit{Else $|\Sigma|=\{{\sigma}\}=1$ and unique keeps its value}}
    		\EndFor
                \If{ $\neg$ unique}
    		\Return \emph{Not-Determined}
       		\ElsIf{$\exists v \!\in\! V: \forall a(o),s \!\in\!AS: \sigma(z_{i+1})\!=\!\sigma(v)$}
       		\State \Return \emph{Subsumed} 
       		\Else \ \Return \emph{Valid}
    		\EndIf
    		\EndFunction
    	\end{algorithmic}
    \end{algorithm}
    
    \Omit{
 Also we do not add arguments of queries, when the object that is obtained for an argument is in every state the same.
This is because we assume that this is information that is not dependent on the action that is applied but on the instance of the domain.
An example can be seen in the npuzzle where queries exits that can infer the corners of the grid in which the tiles are placed.
These queries yield in any state, where for example an action up is applied the same object.
}
    
    The learning algorithm, that we call \synth, ``synthesizes'' the query $Q(x,y,z)$ one subquery
    $Q_i(x,y,z^i)$ at a time, with  each subquery being constructed one lifted atom $q_{i,j}(x,y,z^i)$
    at a time too, where a \textit{lifted atom} is an atom whose arguments are free variables from $x$, $y$ and $z^i$. 
    More precisely, \synth  constructs  sequences of atoms  of the form
    
        \begin{eqnarray*}
     q_{1,1}(x,y,z^1), \ldots, q_{1,n_1}(x,y,z^1) \ ; \\
      \  q_{2,1}(x,y,z^2), \ldots, q_{1,n_2}(x,y,z^2) \ ; \\
      \  \ q_{m,1}(x,y,z^m), \ldots, q_{1,n_m}(x,y,z^m)
    \end{eqnarray*}

\noindent  where $q_{i,j}$ denotes predicates observed in the traces, 
and $z^i$ contains the variable $z_i$, possibly the  $z$ variables up to $z_i$,
and no variable $z_j$, $j > i$. The $Q_i(x,y,z^i)$ expressions correspond to the conjunction of the lifted  atoms $q_{i,*}(\cdot)$,
and the conjunction of the expressions $Q(x,y,z^i) = Q_1(x,y,z^1)$, \ldots, $Q_i(x,y,z^i)$ must determine single values for the variables $z_j$ for $j \leq i$
and a given value for $x$. 

    The generation of the ordered atom sequences $Q_{i}(x,y,z^i)$ that make up the query $Q(x,y,z)$
    for   satisfying  Conditions 1--3  (stratification, validity, and  maximality) is computed ``greedily'',
    as all the successful maximal queries  end up  denoting the same unique tuple of objects.\footnote{The algorithm can also be seen as
    a dynamic programming algorithm that builds the subqueries $Q_i(x,y,z^i)$ sequentially.} 
    For this, only sequences of atoms are considered that preserve stratification and validity.
    The sequence becomes invalid when the resulting partial query $Q(x,y,z^{i+1})$ becomes unsatisfiable with $x=o$
    in some state $s$ of the traces  where the action $a(o)$ applies. 
    In addition,  atoms  $q_{i+1,*}(x,y,z^{i+1})$ are   considered in  the  sequence  only after  the preceding  variable $z^i$
    has been   determined by the formula $Q_i(x,y,z^i)$ built so far (stratification). Finally, the computation finishes with
    the query   $Q(x,y,z) = Q_1(x,y,z^1), \ldots, Q_n(x,y,z^n)$ when the atom sequence representing the query cannot be extended with an
    additional variable $z_{n+1}$ which is not equivalent to the explicit $x$ variables,  or the  implicit $z_i$ variables, $i \leq n$. 
	
    In short,  starting with $i=1$, \synth   tries  to append a new  atom $q_{i,j}(x,y,z^i)$ to the sequence prefix (initially empty),
    while preserving  validity; i.e., the resulting formula  $Q(x,y,z^i)$ must remain  satisfiable with $x=o$ when $a(o)$ applies in a state $s$. 
    This process continues until a sequence of atoms is found for which the  grounding of variable $z_i$ becomes unique for each action grounding $a(o)$ and state in which the action is applied.   At that point, we have the subquery $Q_i(x,y,z^i)$ and all the ones preceding it.
    
    \begin{algorithm}[t]
    	\caption{\textsc{expand}: Extends $Q(x,y,z^i)$ with lifted atoms $p(w)$  to determine one more variable $z_{i+1}$}
    	\label{alg:find_expansion}
    	\begin{algorithmic}
    		\State \textbf{Input:}  $Q(x,y,z^i) = \bigwedge_{j=1}^{i} Q_j(x,y,z^j)$ \Comment{Valid query Q}
    		\State \textbf{Input:} AS $ = \{(a(o_i), s_i)\}_{i=1}^n$  \Comment{State-action pairs}
    		\State \textbf{Output:} $Q_{i+1}(x,y,z^{i+1})$ \Comment{Valid extension of Q}
    		\State
    		\Function{Expand}{$Q(x,y,z^i), \text{AS}$}
    		\State $Q \gets \{ p(w) \mid p \in P, w\in \{ x, y, z^{i+1} \}$, $z_{i+1} \in w$ $\}$
    		\State $Q_0 \gets \{q_d\}$ \Comment{$q_d$ is dummy query $true$}
    		\While{$Q_0 \not= \emptyset$}
    		\State $Q_{next} \gets \{\}$
    		\For{$q \in Q_0$}
    		\For{$q' \in (Q \setminus q)$}
    		\State $q'' \gets q \land q'$
    		\State $\text{res} \gets \textsc{test}(Q(x,y,z^i), q'', AS)$
    		\If{$\text{res} = $ \emph{VALID}}
    		\State \Return $q''$
    		\ElsIf{$\text{res} =$ \emph{Not-Determined}}
    		\State $Q_{next} \gets Q_{next} \cup q''$
    		\EndIf
    		\EndFor
    		\EndFor
    		\State $Q_0  \gets Q_{next}$		
    		\EndWhile
    		\State \Return \emph{Maximal}  \Comment{$Q(x,y,z^i)$ can't be extended}
    		\EndFunction
    	\end{algorithmic}
    \end{algorithm}

    The two main routines of  \synth are displayed in Algorithms  \ref{alg:check_query} and \ref{alg:find_expansion}.\footnote{The  actual code is  equivalent but more efficient.}
    The first, \textsc{test} procedure, gets as input a query $Q(x,y,z^i)$, a new subquery $Q_{i+1}(x,y,z^{i+1})$, and a set $AS$  of action-state pairs $(s,a(o))$
    where an instance $a(o)$ of action $a$ applies in state $s$ in the given traces, and checks whether the query that conjoins the two makes the $z_{i+1}$ variable
    determined in all such states. For this, it computes the set of groundings $\Sigma$ of the variables in $z^{i+1}$ that satisfy this conjunction for $x=o$,
    assuming that  there is a single grounding $\sigma$ for the variables $z_1, \ldots, z_i$ in $z^i$ that satisfies $Q(x,y,z^i)$ given $x=o$. 
    If $|\Sigma|=0$, it returns that the query extension is not valid, if $|\Sigma|>1$, that it is not determined (no unique grounding),
    and if $|\Sigma|=1$, it returns that it is valid, if the new variable $z_{i+1}$
    does not have the same denotation as a variable in $x$ or $z^i$ over all state-action pairs $(a(o),s)$ in $AS$. Otherwise, the procedure returns that the query extension is subsumed. 

	Algorithm~\ref{alg:find_expansion} displays the procedure \textsc{expand},  which is the heart of \synth: it incrementally refines a precondition query
	$Q=Q(x,y,z^i)$,  which initially  contains only the \emph{dummy query} $q_d$, that is  always  true and does not involve (constraint) any variables,
	by conjoining it with lifted  atoms $q$, one at a time,  until these atoms jointly form the new query component $Q_{i+1}(x,y,z^{i+1})$ that  determines
	the unique  grounding of the variable $z_{i+1}$.
  For this, the \textsc{test} procedure is called with $Q$, $q$, and the relevant set of state-action pairs $AS$
 (instances $a(o)$ of $a$ in the traces and the states $s$ where are applied).
  The atom $q$ can be used to expand the query only if \textsc{test} returns ``valid'', and in this case the exapansion $Q_{i+1}(x,y,z^{i+1})$ is returned. 
 	If the algorithm \textsc{test} returns ``not-determined'', the algorithm \textsc{expand}  will expand $q$ in the next iteration.
 	When the subquery is ``non-valid'' or ``subsumed'' we known that it cannot be used to expand $Q(x,y,z^i)$.

 \medskip
 
\noindent     The algorithm \synth  is complete in the following sense:

    \begin{theorem}
      Let $\phi(x,y,z)$ be the  precondition of the action $a(x)$ in the hidden, stratified domain,
     and let $z_i = c_i^{s,o}$  be the value of variable $z_i$ in the state $s$ of the traces where the action $a(o)$ applies, $i=1, \ldots, n$. 
     Then \synth returns a query $Q(x,y',z')$ from the traces with $z'=(z'_1, \ldots, z'_m)$
     where  each $z_i$ variable in $z$ is captured by a  $z'_j$ variable in $z'$;
     namely, there is a function $\rho(z_i)=z'_j$ such that in all such   states $s$,
     $Q(x,y',z')$ is satisfiable in $s$ with $x=o$, and in all the  satisfying groundings $\sigma$,
     $\sigma(z'_j)=c_i^{s,o}$.
    \end{theorem}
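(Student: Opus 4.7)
The strategy is induction on the iterations of the outer loop of \synth, maintaining the invariant that after $i$ iterations the captured variables $\{z'_1,\ldots,z'_i\}$ supply matches for a growing subset of the hidden $\{z_1,\ldots,z_n\}$. Call $z_k$ \emph{captured} after iteration $i$ if some $z'_j$ with $j\le i$ satisfies $\sigma(z'_j)=c_k^{s,o}$ in every $(a(o),s)\in AS$, where $AS$ is the set of state-action pairs in the traces in which $a$ is applied. When \textsc{expand} eventually returns \emph{Maximal}, a contradiction argument will force every $z_k$ to be captured, so the function $\rho$ of the theorem can be read off by sending each $z_k$ to a matching $z'_j$.

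First I would show that \textsc{expand} cannot return \emph{Maximal} while some $z_k$ is still uncaptured. Let $k$ be the smallest uncaptured index and let $z'_{j_1},\ldots,z'_{j_{k-1}}$ be the captured counterparts of $z_1,\ldots,z_{k-1}$ supplied by the inductive hypothesis. Substituting $z'_{j_m}$ for $z_m$ ($m<k$), a fresh $z'_{new}$ for $z_k$, and fresh $y$-variables for the $y$-atoms into $\phi_k(x,y,z^k)$ yields a conjunction of observed atoms over the pool $\{x,y,z'_1,\ldots,z'_i,z'_{new}\}$ searched by the BFS in \textsc{expand}. By the inductive hypothesis this translated $\phi_k$ is satisfiable in every $(a(o),s)\in AS$ with $x=o$; by the stratified determinacy of $\phi$ it pins $z'_{new}$ to $c_k^{s,o}$; and by the minimality of $k$, this value differs from every captured grounding in at least one $(a(o),s)$, so \textsc{test} would classify it as \emph{Valid}. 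Hence \textsc{expand} must return some valid extension. Second, I would show that any valid extension returned introduces a $z'_{i+1}$ whose consistent grounding across $AS$ equals $c_\ell^{s,o}$ for some uncaptured $\ell$, extending the capture map. The satisfiability and uniqueness clauses of the theorem then follow directly from the checks built into \textsc{test} and from the composition of stratified subqueries.

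The main obstacle will be that second claim --- ruling out the possibility that \synth accepts an extension whose new variable is uniquely determined across $AS$ but does not coincide with any hidden $c_\ell^{s,o}$. Such an ``extraneous'' captured object would be a persistent function of $(x,s)$ expressible in the observed vocabulary but not referenced in $\phi$. I would attack this by combining the stratified-determinacy assumption on the hidden domain with the fact that \textsc{test}'s non-subsumption check ranges over every $(a(o),s)\in AS$: a coincidence persisting across the whole trace sample reflects a semantic identity rather than an accidental sample match, and the only objects semantically available relative to $a(o)$ in $s$ are the $c_k^{s,o}$ together with $o$. A precise version of this argument requires identifying the set of objects $(x,s)$-definable in the observed vocabulary, which I expect to be the delicate technical step.
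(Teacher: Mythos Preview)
The paper states this theorem without proof; the only related argument appears in the sketch for the subsequent theorem, which simply asserts that ``\synth learns queries $Q_i$ for each of the hidden $z_j$ variables''. So your proposal is already more detailed than anything the paper supplies.

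Your first claim is the right core and is essentially correct: if $k$ is the least uncaptured hidden index, the subquery $\phi_k(x,y,z^k)$ with the already-captured $z'_{j_m}$ substituted for $z_m$ ($m<k$) is a candidate extension that is satisfiable in every $(a(o),s)\in AS$, determines $z'_{new}=c_k^{s,o}$ uniquely, and is not subsumed (else $z_k$ would already be captured). Hence \textsc{expand} cannot return \emph{Maximal}. Combined with termination, this already yields the theorem.

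Your second claim, however, is both unnecessary and false, and the ``main obstacle'' you flag is a phantom created by your own induction scheme. The theorem asserts only that every hidden $z_i$ is captured by \emph{some} learned $z'_j$; it does not assert the converse. The paper is explicit that \synth may learn extra $z'$ variables matching no hidden argument---this is precisely the experimental column $|z\setminus x'|$, and the Gripper analysis gives concrete instances (``the other room'', ``the other gripper''). Your proposed resolution, that ``the only objects semantically available relative to $a(o)$ in $s$ are the $c_k^{s,o}$ together with $o$'', is refuted by those very examples.

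What you actually need in place of the second claim is a short termination argument: by the non-subsumption branch of \textsc{test}, each accepted $z'_{i+1}$ has a denotation function over $AS$ distinct from all previously accepted variables and from the $x$-arguments; since $AS$ is finite there are only finitely many such functions, so the outer loop halts. At that point your first claim forces every hidden $z_k$ to be captured, and $\rho$ can be read off directly. Drop the second claim and the proof is clean.
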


    The theorem doesn't imply  soundness in the sense that groundings of $Q(x,y,z)$ that satisfy $x=o$ in the  state $s$ of the traces, imply that  the ground action $a(o)$
    must be  applicable in $s$. Indeed, for this, the precondition query $Q(x,y,z)$ needs to be extended with other  atoms as shown below.
        
    In the implementation of \synth, the lifted atoms $q_{i,j}(x,y,z^i)$  are ordered lexicographically, so that a single ordering of the atoms  is considered in
    the construction of the subquery $Q_i(x,y,z_i)$. Moreover, these atoms are  represented by atom patterns, following  an idea from \sift \cite{sift}.
    Namely, a pattern  like      $p(\_,2,i_z,1,\_,1_z)$  is used  to represent  the  lifted atom $p(y_1,x_2,z_i,x_1,y_2,z_1)$ in $Q_i(x,y,z^i)$ where $i > 1$.

\subsection{Learning the Extra Preconditions and Effects}

The precondition formula $Pre_L(a(x))$ of  the \stripsp action $a(x)$ in the   learned domain $D_L$  conjoins  the query $Q(x,y,z)$ obtained from the traces
with a formula  $Q'(x,y,z)$ that does  not refine the binding of the $z$ variables,  but which contains all lifted atoms that are true in all the states $s$ where an action $a(o)$ applies.
This ensures that if $Pre_L(a(x))$ is satisfiable with $x=o$ in a trace, then the precondition $Pre(a(x))$ of the  action $a(x)$ in the hidden domain $D$ will
also be  satisfiable with $x=o$, and hence that if the ground action $a(o)$ applies in the state $s$ of a trace in the learned domain $D_L$, 
it also applies in $s$ in the hidden domain. The reverse implication is also true, and if $a(o)$ applies in a state $s$ of a trace in the hidden
domain $D$, it also is applicable in the same state $s$ in the learned domain $D_L$. Otherwise, the action $a(x)$ would feature a precondition
in the learned domain that is not true in a state $s$ of the traces where an action $a(o)$ is done, and hence  the precondition in $D_L$ would
not be valid over the traces.

The formula $Q'(x,y,z)$ to augment the precondition query $Q(x,y,z)$ in $Pre_L(a(x))$ is obtained as the conjunction of the lifted atoms $p(x,y,z)$
whose arguments are variables $x_i$ from the lifted action $a(x)$, $z_i$ variables from $z$, and $y_i$ variables from $y$. 
The $y_i$ variables
can only appear once in these lifted atoms and they are assumed to be existentially quantified. The $x_i$ and $z_i$ variables are free,
and in a state $s$ where an action $a(x)$ is applied, their values are determined by the action arguments and the 
the precondition query $Q(x,y,z)$.

A lifted atom $p(x,y,z)$ is a \emph{valid precondition} of $a(x)$ given the traces $T$, and hence pushed into $Q'(x,y,z)$ and $Pre_L(a(x))$,
if the formula $\exists y'. p(x,y',z)$ is true in all the states  $s$ of $T$ where an instance $a(o)$ of $a$ is applied, 
provided  the binding  $x=o$ for $x$  and the unique grounding for  $z$  determined by the precondition query $Q(x,y,z)$.

Likewise, a lifted atom $p(x,z)$ is a \emph{valid positive (resp. negative) effect} of $a(x)$ given the traces $T$, and hence pushed into $Add(a(x))$
(resp. $Del(a(x))$), if the formula $p(x,z)$ is false (resp. true) in a state $s$ of $T$,  where an  action instance $a(o)$ of $a$ is  applied,
and  true (resp. false) in the resulting state $s'$, provided the binding 
 $x=o$ for $x$ and the unique grounding for  $z$ determined by the precondition query $Q(x,y,z)$.

\subsection{Properties}

If $D_L$ denotes the learned model, namely, the action schemas $a(x)$ for the actions appearing in the traces $T$ with their learned
preconditions and effects, it can be shown that  $D_L$ is  equivalent to the hidden domain $D$  provided that the set of traces $T$ is rich enough
and that $D$ is stratified (i.e., the unique  grounding of the $z$ variables can be determined one $z_i$ variable at a time):

\begin{theorem}
  For a suitable finite set of traces $T$ from the hidden (stratified) domain $D$, the learned domain $D_L$ is equivalent to $D$, meaning that
  the state-action traces resulting from any instance $P=\tup{D,I}$ of $D$, are traces of $P'=\tup{D_L,I}$, and vice versa.
\end{theorem}

\begin{proof} (Sketch)   \synth learns queries $Q_i$ for each of the hidden $z_j$ variables in $D$ such that the hidden preconditions
  $\phi(x,y,z)$ and $Q_i$ pick up the same denotations. 
  The resulting learned  preconditions in $D_L$  may be different than in $D$, and the indices $i$ and $j$ may be different too, but the 
  function $f_{a,s}(x)$  represented by both, that  defines the unique grounding of the $z$ variables, would  be the same. 
 Also, by construction  all the  preconditions in $D$   are expressible with the  $x,y,z$ variables, and hence will be captured in the learned domain $D_L$,
 which may include other preconditions too. 
 Similar to preconditions, all effects are expressible with $x,z$ variables, and since the domain is assumed to be well-formed, all effects of the hidden domain are captured by the learned domain $D_L$ too.  Finally,  any invalid precondition will be rendered invalid through a single state in a  trace, and there is
  a finite number of such invalid preconditions that can be constructed. 
\end{proof}

\subsection{Negation}


The extension of \synth for  learning  \emph{negated preconditions} is convenient and direct, and
is implemented  in the  algorithm. The only change is that in negated lifted atoms $\neg p(x,y,z)$,
the $y$ variables are interpreted as \emph{universally} quantified; namely, as  $\forall y. \neg p(x,y,z)$.
As a result, a precondition $\neg p(x,y,z)$ of an action $a(x)$ is true in a state $s$ with $x=o$ and
$z=o'$, if the formula $\forall y. \neg p(x,y,z)$ is true in $s$ with the bindings for the $x$ and $z$
variables provided by $o$ and $o'$. 

For example, in the \puzzle,  the position $z_1$  of  the blank can be obtained  without the predicate $blank(x_i)$
through  the negated precondition $\neg at(y_1, z_1)$  which stands for the formula $\forall y_1. \neg at(y_1, z_1)$,
as this formula is true only when $z_1$ represents the unique cell without a tile.


    \Omit{
    More generally:

    \begin{definition}[Atom Patterns]
      An \emph{atom pattern}  $p[t_1,\ldots,t_k]$ in  the subquery $Q_i(x,y,z^i)$
      for the action   $a(x)$, $x=(x_1, \ldots, x_n)$
      is  formed by a predicate $p$ of arity $k$, and the  $k$ indices $t_j \in \{l,m_z,\_\}$, where $1 \leq l \leq n$, and $1 \leq m \leq i$. 
      The atom pattern $p[t_1,,\ldots, t_k]$ represents the lifted atom $p(x'_1, \ldots, x'_k)$ where $x'_j=x_l$ if $t_j=l$, $x'_j=z_m$ if $t_j = m_z$,
      and else $x'_j = y_j$. 
     \end{definition}

    For these atom  patterns $p[t]$  to  represent  a lifted atom  in $Q_i(x,y,z^i)$, some index $t_j$  must have value $t_j=i_z$.

     A grounding  of the $x$ and $z$ variables, $\sigma(x_i)=c_i$,   $\sigma(z_i)=c'_i$, \emph{satisfies
    the atom pattern}    $p[t_1, \ldots, t_k]$ in $s$ if there is a ground atom $p(o_1, \ldots, o_k)$ true in $s$
    such that $o_i=c_j$ if $t_i=j$,  and $o_i=c'_j$ if $t_i = j_z$. Such a ground atom $p(o_1, \ldots, o_k)$ is  said to \emph{match}
    the pattern $p[t]$ in the state $s$ with the grounding $\sigma$ of the $x$ and $z$ variables. 
    The indices $t_i = \_$ in the pattern  express unique $y_i$ variables  which are assumed
    to be existentially quantified,  so for the ground atom 
    $p(o_1, \ldots, o_k)$ in $s$ to match the pattern,   it suffices to set $\sigma(y_i)$ to $o_i$.

    As an example, the pattern $p[2_z,\_,1]$ is true in a state $s$ that contains a single  $p$-atom $p(c_1,c_2,c_3)$
    iff  the  grounding $\sigma$ is such that $\sigma(z_2)=c_1$ and  $\sigma(x_1)=c_3$. The symbols '$\_$' in the patterns,
    which stand for existentially quantified variables that appear only once,   play the role of ``don't care's''.
    }

\Omit{    
\subsection{Learning the Extra Preconditions  $Q'(x,y,z)$}

\hector{Important: I think that the sections: learning extra preconditions and learning extra effects can go. The extra preconditions can be captured as valid
  lifted $p(x)$ atoms; the effects are valid $p(x,z)$ literals. So the talk about features, notation, etc can go. In this case, we should keep the main theorems,
  about the correctness/soundness/completeness of \synth}

The precondition formula $Pre_L(a(x))$ of  the \stripsp action $a(x)$ in the   learned domain $D_L$  conjoins  the query $Q(x,y,z)$ obtained from the traces
with a formula  $Q'(x,y,z)$ that does  not refine the binding of the $z$ variables,  but  which is true in all the states where
an action $a(o)$ applies.  This ensures that if $Pre_L(a(x))$ is satisfiable with $x=o$, then the precondition $Pre(a(x))$ of the  action $a(x)$ in
the hidden domain $D$ will  also be  satisfiable with $x=o$, and hence that if the ground action $a(o)$ applies in the state $s$ in the learned domain $D_L$, 
it also applies in $s$ in the hidden domain.

For example, the precondition $on(x_1,z_1)$ of the \stripsp action $unstack(x_1)$ determines the unique grounding of the variable $z_1$ that in \strips
has to be given as an explicit action argument. Yet, the precondition $clear(x)$, which does not involve

\Omit{
\niklas{
These additional preconditions are necessary since the precondition $Q(x,y,z)$ obtains referring expressitions for the $z$-variables but may not sufficiently limit the action's applicability.
As mentioned earlier, the action $unstack(x_1)$ with precondition $\phi((x_1), (), (z1)) = on(x, z_1)$  in \stripsp contains the same variables as the corresponding \strips action.
However, without additional preconditions, the action $stack(x_1)$ is applicable for every block $b$ that is stacked on another block.
To ensure that the set of applicable actions matches that of the blocks world domain, the precondition $clear(x_1)$ must be added.
}
}

\begin{definition}
  A precondition pattern $p[t]$ for an action $a(x)$ with  query-precondition $Q(x,y,z)$, $x=x_1,\ldots, x_n$, and $z=z_1,\ldots, z_m$, 
  is an atom pattern with indices $t_i \in \{j,l_z,\_ \}$ with $1 \leq j \leq n$, and $1 \leq l \leq m$. 
\end{definition}

The precondition pattern $p[t]$ represents  a unique lifted $p$-atom  which is a \emph{sound} precondition given the traces iff:

\begin{definition}
  A precondition pattern $p[t]$ of action $a(x)$ and query-precondition $Q(x,y,z)$ is \emph{sound} relative to the
  observed traces if in all  states $s$ where an action $a(o)$ applies, there are groundings of the $x$, $y$, and $z$
  variables that make $Q(x,y,z)$ true with $x=o$, and all such groundings   make the pattern $p[t]$ true in $s$. 
  \end{definition}

The \emph{extra preconditions} in  $Q'(x,y,z)$ of $a(x)$ are then defined simply as the conjunction of
the atoms $p(x'_1, \ldots, x'_k)$ for the sound precondition patterns $p[t]$ of $a(x)$,
such that $x'_i = x_j$ if $t_i=j$,  $x'_i = z_j$ if $t_i = j_z$, and $x'_i = y_l$ if $t_i = \_$, 
where $y_l$ is a ``fresh'' implicitly existentially quantified variable that does not appear anywhere else
in $Pre_L(a(x))$. The  soundness and completeness properties over  the given traces
can be expressed as follows:

\begin{theorem}
  Let $Pre_L(a(x))$ be the formula $Q(x,y,z') \land Q'(x,y,z')$. Then there is a  mapping $\rho$
  of the $z_i$ variables in the preconditions $Pre(a(x))$ of the action $a(x)$ in
  $D$ to the variables $z'_j$ in the preconditions $Pre_L(a(x))$ of $a(x)$ in the learned domain $D_L$, such that 
  in all states $s$ of the given traces from $P=\tup{D,I}$, a ground action $a(o)$ is applicable in $s$ given $P$
  with bindings $z_i=c_i$,   only if the action $a(o)$ is applicable in $s$ in $P'=\tup{D_L,I}$ 
   with bindings $z'_j=c_i$ for $\rho(z_i)=z'_j$. 
\end{theorem}

Notice that neither this soundness result nor the completeness results above imply that $a(o)$ being applicable in a state $s$ in $P'=\tup{D_L,I}$
implies that $a(o)$ must be  applicable in $P=\tup{D,I}$. For a sufficiently rich,  finite set of traces, however, this is guaranteed.
The reason, roughly, is that $Pre_L(a(x))$ will contain all the atoms in $Pre(a(x))$, but it may also contain  lifted atoms that
are not true when an action $a(o)$ is applicable. Since one state suffices to remove such ``false'' lifted atoms from the precondition,
a finite set of states and traces  suffices to remove invalid preconditions:

\begin{theorem}
 For a suitable  finite set of traces $T$, the preconditions of an action $a(x)$ in  $D$ and $D_L$ are equivalent. 
 That is, there is a  mapping $\rho$ of the $z$ variables in $a(x)$ over $D$ into $z'$ variables of the action $a(x)$ in the domain
 $D_L$ learned from $T$,   such that  $a(o)$ is applicable in the initial state $s$ of an  instance $P=\tup{D,I}$  with binding $z_i=c_i$  iff 
  $a(o)$ is applicable in the initial state $s$ of the instance $P'=\tup{D_L,I}$ with binding $\rho(z_i)=c_i$.
\end{theorem}

While it's possible to characterize the sets of traces $T$ that ensures the soundness and completeness of the learned action preconditions,
we will evaluate these properties experimentally.


\subsection{Learning the Action Effects}

For learning  action effects, we follow the approach in \sift 
that associates each   predicate symbol $p$ with a  Boolean feature $f_p= \tup{k,A^+_p,A^-_p}$
where $k$ is the observed arity of $p$ in the traces, and $A^+_p$ and $A^-_p$ are two  sets of
actions patterns $a[t]$ of arity $k$: those that add $p$-atoms, and those that  delete them.
The  feature $f_p$ represents  the lifted atom $p(x)$ and how $p$-atoms change with the actions.\footnote{Two differences with  \sift are that
the predicates $p$ are not hypothetical but  given,  as the \stripsp states are observed, and that the action
patterns $a[t]$ affecting $p$ are split into  the actions that add $p$-atoms and those that delete them.}

\sift  assumes \strips actions and  considers   action patterns $a[t]$ 
with indices $t_i \in [1,n]$ such that $t_i=j$ refers to the $j$-th argument
$x_j$ of the action $a(x)$ with arity  $n$. For 
 \stripsp actions, the indices $t_i$  can also refer to the  action arguments  $z_i$
 which are implicit in the action precondition:

\begin{definition}
  An \emph{action pattern} $a[t]$ of arity $k$  in \stripsp for an action $a(x)$
  with  precondition $\phi(x,y,z)$ involves  $k$ indices, $t_1, \ldots, t_k$, 
  $t_i \in \{j,j_z\}$,   that   represent   explicit action argument $x_j$,
 and implicit   argument $z_j$ in $\phi(x,y,z)$ respectively. 
\end{definition}

A  feature $f_p$ for a predicate of arity $k$ in the traces involves  positive and negative action
patterns that add and delete $p$-atoms respectively:

\begin{definition}
  A feature $f_p=\tup{k,A^+_p,A^-_p}$ in \stripsp represents a predicate $p$ of arity $k$ in the domain
  with two disjoint sets of action patterns $a[t]$ of the same arity $k$, $A^+_p$ and $A^-_p$. 
  The actions $a$ must appear in the traces and the indices in $t$ represent the explicit action arguments
  $x_j$ of $a(x)$ if $t_i=j$, and the implicit action argument $z_j$ in the precondition $\phi(x,y,z)$ if $t_i = j_z$  
\end{definition}

The effects of the action $a(x)$ are given by the features that are \emph{consistent} with the traces $T$:

\begin{definition}
  A feature $f_p = \tup{k,A^+_p,A^-_p}$ is \emph{consistent} with the traces $T$ iff for  every triplet $(s,a(o'),s')$ in $T$
  a ground  atom $p(o_1,\ldots, o_k)$ goes from true to false (resp. false to true) iff there is an action pattern
  $a[t_1, \ldots, t_k]$ in $A^+_p$ (resp. in $A^-_p$) such that   $o_i = o'_j$ if $t_i=j$ and $o_i =  o''_j$ if $t_i=j_z$,
  where $o''_j$ is the value of  $z_j$ in the groundings that satisfy the precondition $\phi(x,y,z)$ with $x=o'$.
\end{definition}

The consistent features can be found easily given the traces, and they are mapped  into the  lifted effects of the actions $a(x)$ as follows:

\begin{definition}
  If the feature $f_p=\tup{k,A^+,A^-}$ is consistent with the  given traces, and $a[t]$ is in $A^+_p$ (resp. $A^-_p$), the action $a(x)$ with precondition
  $\phi(x,y,z)$ has positive (resp. negative) effect   $p(x'_1,\ldots, x'_k)$ where $x'_i=x_j$ if $t_i=j$, and $x'_i=z_j$ if $t_i = j_z$.
\end{definition}

\subsection{Properties}

It can be shown that the  action models $a(x)$ in the learned domain $D_L$ are equivalent to the hidden action models in the hidden domain $D$ provided
that the set of traces $T$ is rich enough:

\begin{theorem}
  For a suitable finite set of traces $T$ from the hidden (stratified) domain $D$, the learned domain $D_L$ is equivalent to $D$, meaning that
  the state-action traces resulting from any instance $P=\tup{D,I}$ of $D$, are traces of $P'=\tup{D_L,I}$, and vice versa.
\end{theorem}

\begin{proof}
  \synth learns queries $Q_i$ for each of the hidden $z_j$ variables in $D$ such that the hidden preconditions
  $\phi(x,y,z)$ and $Q_i$ pick up the same denotations. 
  The resulting learned  preconditions in $D_L$  may be different than in $D$, and the indices $i$ and $j$ may be different too, but the 
  function $f_{a,s}(x)$  represented by both will be the same. 
Also, by construction  all the  preconditions and effects in $D$   are expressible with the  $x,y,z$ variables, and hence will be captured in the learned domain $D_L$,
which may include other preconditions too. Finally,  any invalid precondition will be rendered invalid through a single state in a  trace, and there is
  a finite number of such invalid preconditions that can be constructed. 
\end{proof}

\Omit{
    First for all variables $z$ a determining query is found, as for every query in the construction of the hidden \stripsp domain the query itself or a sufficient sub query will be found.
    If a $z_i$ allows multiple queries all queries (or sufficient sub queries) will be found, but only one will be used in the learned domain description $D_L$.
    For similar but not equivalent queries the exists a finite input distinguishing them, for incorect queries the exists a finite input where they are not uniquely identifying a $z_i$ or
    are they are impossible to fulfill.
}

} 



\Omit{
\hector{If patterns are left out, as part of leaving out current formulation of precondition and effects learning, using lifted atoms instead; we should
  adjust notation for this section -- so it's defined in terms of lifted atoms only and no patterns. ** Also adjust texts in experiments where patterns are used}

The extension of \synth for  handling \emph{negated preconditions} is convenient and direct, and
is implemented  in the  algorithm.
For this, the atom patterns $p[t_1, \ldots, t_k]$ considered when constructing the query-preconditions  $Q(x,y,z)$ and  the extra preconditions $Q'(x,y,z)$
of the action $a(x)$ are extended with negated atom patterns $\neg p[t_1, \ldots, t_k]$. The question is what are the groundings that satify such  negative atom patterns
in a given state.  We have seen that a grounding  of the $x$ and $z$ variables $\sigma(x_i)=c_i$  and  $\sigma(z_i)=c'_i$ satisfies the \emph{positive} atom pattern
$p[t_1, \ldots, t_k]$ in a state $s$ if there is a ground atom $p(o_1, \ldots, o_k)$ true in $s$ such that $o_i=c_j$ if $t_i=j$ and $o_i=c'_j$ if $t_i = j_z$.  We said
that the ground atom $p(o_1, \ldots, o_k)$ then matches the pattern $p[t]$ in $s$ under the substitution $\sigma$. 

The conditions under which a  \emph{negated atom pattern}   $\neg p[t_1, \ldots, t_k]$ is true in a state are similar when the selectors $t_i$ are indices $j$ or $j_z$
referring to explicit action arguments  $x_j$ or  implicit action arguments $z_j$. In such a case, a grounding $\sigma(x_i)=c_i$  and  $\sigma(z_i)=c'_i$ of
the $x$ and $z$ variables makes  the atom pattern $\neg p[t_1, \ldots, t_k]$ true in   $s$ if the single grounded atom $p(o_1, \ldots, o_k)$
that matches the pattern  in $s$ (namely with  $o_i=c_j$ if $t_i=j$ and $o_i=c'_j$ if $t_i = j_z$)  is \emph{false} in $s$.
If some indices $t_i$ in the negated pattern $\neg p[t]$ refer  to  existential variables, 
the grounding $\sigma(x_i)=c_i$ and $\sigma(z_i)=c'_i$ makes the atom pattern $\neg p[t_1, \ldots, t_k]$ \emph{true}
if \emph{all} the grounded atoms $p(o_1, \ldots, o_k)$ that match the pattern in $s$, are false in $s$.
For example, a negated pattern $\neg p[2_z,\_]$ is true in $s$ for $\sigma(z_2)=c$, if there are no true atoms $p(c,c')$ in $s$ for any
constant $c'$. In particular, in the \puzzle,  the position $z_1$  of  the blank can be obtained  without the predicate $blank(x_i)$
through  the negated precondition $\neg at(y_1, z_1)$ which stands for the formula $\neg \exists y. at(y, z_1)$ and which specifies $z_1$
as the cell with no tile. 
}

\Omit{
\niklas{
Negated preconditions can be used in the \puzzle to obtain the position of the blank without relying on the predicate $blank(x_i)$.
This is because the blank occupies the only cell in which no tile is placed; consequently, for every other cell $c_k$ there is a tile $t_k$ such that $at(c_k,t_k)$ is true.
Therefore, the query $Q((),(y_1),(z_1)) = \neg at(y_1, z_1)$ determined the cell $c_b$ of the blank in every state of this domain.
}
}

\Omit{ 
To reason about situations where a $z_i$ is unique,
because of being the only one to not have a certain relation to other objects
negated patterns are introduced.

A negated pattern $\neg at[\_,?]$ corresponds to universally quantifing some $\forall y_i$,
such that no atom \(at(y_i,z_j)\) is true.
It is important to note that this quantification is typed
meaning that we only consider objects $y_i$ that are able to actually assume this position (in other states).
We assume the type information to be part of the input.
For example in npuzzle only tiles will be considered for not being at a position as a position is never at another position.
And only positions will be considered for not being filled with a tile.
The query $\{\neg at[\_,?]\}$ thus searches for an unique position that is not filled with any tile.
An universal quantification is more reasonable than existence quantification here
as the domains have much more false than true atoms.

***** Edit ***: 
Later on we will introduce a limited form of \emph{negation} in \stripsp that would allow us to get rid of predicates that are needed in \stripsp encodings. For example,
the atoms $atB(c)$ used in preconditions to  track the position of the  ``blank'' tile can be replaced by  precondition $\neg \exists c. at(y,c)$ that captures
that the cell is  not taken by any tile. As a result, the resulting \stripsp traces from which models will be learned will\jonas{will .. will does not read well} just  interleave  the four actions
$up$, $down$, $right$, $left$ with no arguments, with $at(t,c)$   atoms representing the position of the tiles. In a sense, this is the minimal information
needed to learn a lifted model of the sliding-tile puzzles, as it represents the state of the world,  without any extra information needed by the \strips encodings,
and the information needed to  distinguish the four actions.

\noindent \textbf{Example.} If $\phi(x,y,z)$ is the precondition of the action schema $unstack(x)$ in \stripsp containing
the atom $on(x,z)$, then $z$ is a determined variable in the schema, as in the states where $on(x,z)$ is true,
there is exactly one ground instance of the atom that is true, as no block can be on two blocks.

}

\Omit{

\subsection{Implementation}

\hector{1. See which extra details worth giving here (vs. appendix).
2.  Explain  how  \stripsp actions in experiments  obtaind from the existing \strips. The
  resulting actions are stratified but not necessarily simple. 3 Negation.}

}


\section{Experiments}

We have tested \synth   over a number of existing \strips domains $D'$.
For this, we converted  these domains  into \stripsp domains $D$
by moving the explicit action arguments in $D'$ that are determined 
into implicit $z$ arguments in $D$. On average, as we will see,
this reduces the number of (observed) action arguments  during  training
by half. Then a single random state-action trace  is sampled from a 
large instance $P=\tup{D, I}$ from $D$ to learn  the domain $D_L$. 
The hidden and the learned domains are then compared over
other test instances in a  verification phase.
We provide further details of the set up below, along with the results.
A finer grained analysis of the results can be found in the next section.
The experiments have been run on Intel(R) Xeon(R) Platinum 8160
CPUs running at 2.10GHz and the data and code are publicly available \cite{synth_implementation}.

\medskip

\noindent \textbf{Domains:} The domains are the ones considered  in the \sift paper \cite{sift}: 
Blocks with 3 and 4 operators, Delivery, Driverlog, Grid, Ferry, Gripper, Hanoi, Logistics,
Miconic, Sliding-tile puzzle, Sokoban.  The Sliding-tile puzzle is
in two versions: one with separate  $x$ and $y$ coordinates denoted as $n$-puzzle, and the other with cells,
denoted as $c$-puzzle. Sokoban-Pull is a 
variation of Sokoban,  adding one action schema for a pull-action to make the resulting domain
\emph{dead-end free}.
Dead-ends present a potential problem in the  generation of data, as random traces are
often trapped in  parts of the state space.

\medskip

\noindent \textbf{Translation into \stripsp:}  The traces are not generated from these  \strips domains $D'$
but from their  \stripsp translation  $D$ where  some of the explicit action arguments in \strips
are pushed into  implicit $z$  arguments. The algorithm for doing this translation automatically
is a simplification of the query learning component of \synth, as the preconditions are given.
In order to determine if an argument  variable $x'_i$ in a \strips\ action $a'(x')$
from $D'$ is determined by  other arguments and hence can become a $z$ variable in the  encoding of  the equivalent action $a(x)$ in
$D$, we  check  if in all states of  traces drawn from $D'$ where the \strips action $a'(o')$ is applied, the value of the argument $o'_i$
is unique given the preconditions.
The preconditions and effects  of the  \strips action $a'(x')$ and the resulting \stripsp action $a(x)$ are the same except for the renaming
of the variables $x'_i$ into  $x$  and $z$ variables.
There are no (existentially quantified)  variables $y$ in $D$, but they can  appear in the learned domains $D_L$.

\medskip
\noindent \textbf{Data generation:} For each \strips domain $D'$, we pick an instance $P=\tup{D,I}$ from the \stripsp translation,
and generate a long random trace with up to ten thousands of state-action pairs.\footnote{Traces from multiple instances could have been used too.}
In some domains, small instances and  short   traces suffice to learn the domains correctly, in other cases,
larger instance and/or longer traces are needed.

\medskip

\noindent \textbf{Verification:} The correctness of the learned domains $D_L$ is  assessed by sampling a number of reachable states $s$
and ground actions $a(o)$ in instances  $P=\tup{D,I}$ where $D$ is the hidden domain. The applicability and effects of these
actions are then compared in $D$ and $D_L$.  A 100\% verification rate indicate a full match. For testing also the translation from \strips to \stripsp,
the \strips actions of the original domain that are applicable in $s$ are also considered, and they must  map to
the same set of successor states. This  comparison with the original \strips domain  is not done however
in the experiments. 

\medskip

\noindent \textbf{Results:} Table~\ref{table:results} shows the results of the experiments. The columns on the left show the domains,
the number of objects in the instance used to generate the single  trace per domain  (\#O), the length of the trace measured as the number of state-action pairs (\#L),
and the  total number of  action arguments in  the original  \strips domain $D'$ ($|x'|$),
and the reduced number of action arguments in the \stripsp translation $D$ used to generate the traces 
($|x|$). This is followed by the sum of this  number $|x|$ and  the total  number of implicit action arguments $z_i$ 
learned along with their queries  ($|x|+|z|$). The following columns show the total number of
explicit but determined  \strips arguments $x'$ that the learned $z$ variables fail to capture ($|x'/z|$), \footnote{
In the translation from \strips $D'$ into \stripsp $D$, some $x'_i$ variables in the \strips actions $a'(x')$
are pushed into $z_j$ variables in the \stripsp actions $a(x)$ of $D$ (this is why $|x| < |x'|$ in general).
These $z_j$ variables and their queries have to be learned in $D_L$, and then $z_j$ captures $x'_i$
if in all the states $s$ where $a(x)$ applies, $z_j$ and $x'_i$ represent the same object.}
and the total  number of $z$ variables  learned  which do not capture any $x'$ variable ($|z/x'|$).
This is extra information learned; namely, valid query expressions that define  functions $f_{a,s}(x)$  over the traces
that are just not  used in $D_L$.  This is followed by the total learning time (T),
and the verification data: number of objects used in the  verification instance (\#O$_v$), number of action-state pairs
tested (\#SA), the verification time (T$_v$) and  the score (\%V). The data on the top part of the figure is about experiments where
the full  \stripsp states in $D$, which are equal to the full \strips states in $D'$, are used in the traces. 
The data on the bottom part is about the  experiments where  all atoms involving  selected
predicates were removed  from the states in the traces (incomplete states).


\medskip

\noindent \textbf{Analysis:} In all domains, \synth learns domains $D_L$ that verify 100\%. The learning times run from a few seconds,  to 1695 seconds in
Driverlog. The times grow  with the number of domain predicates, their arities, and the length of the traces. Both the length of the traces and the size
of the instances used  to generate the traces were selected  so that \synth outputs the  correct domains. This doesn't happen if the traces are too short
or the instances are too small. The domains that required the longest traces are Driverlog and Grid. At the same time, the $n$-puzzle provides a good illustration of the
size of the instances required for correct learning. The $c$-puzzle  uses cells and the  domain is learned correctly from traces of the 4x4 $c$-puzzle.
The $n$-puzzle, on the other hand,  uses separate $x$ and $y$ coordinates instead of cells,
and requires traces from the larger  5x5 $n$-puzzle, as  smaller instances resulting in invalid
referring expressions ($z$ variables and their queries).

Table \ref{table:results} shows that  the actions in the traces contain on  average half of the arguments of the original \strips actions (i.e.,
the column  $|x'|$, which expresses the total number of \strips action arguments, is  on average twice the value of the column $|x|$,
which captures the total number of \stripsp action arguments used in the traces). More details about this below.
The learned queries capture indeed  all  the ``redundant'' (determined) \strips action arguments (column $|x'\!\setminus\!  z|$)
and more (column $|z \!\setminus\!  x'|$). For instance, in Gripper,   there are just two rooms and two grippers, and $z$ variables
are learned that  pick ``the other room'' and ``the other gripper'', which are not used in the learned preconditions or effects of the actions.

The bottom part of the table shows the results when learning from \emph{incomplete states} in four  domains where all atoms involving  selected
predicates were removed from the traces.
The  $blank$ predicate is removed in the $c$-puzzle,\footnote{Interestingly, 
in  the $n$-puzzle, which is like the $c$-puzzle but with separate $x$ and $y$ coordinates,
the same predicate cannot  be removed, as  the limited form of existential quantification
in action preconditions in our \stripsp fragment is not expressive enough to recover it.}
%
%
the \emph{on-table} and \emph{clear} predicates in Blocks, the \emph{in-lift} predicate in Miconic, and the \emph{on} predicate in Ferry.
These predicates can  be defined in terms of the other predicates and thus do not provide   information about the state of the world,
but about the atoms needed to get an  \strips encoding. In \stripsp, these predicates are learned as  existentially quantified
preconditions and  are not needed in the traces.

\begin{table*}[t]
\setlength{\tabcolsep}{5.6pt}\centering
    \begin{tabular}{lcccc|ccccc|crcc}
\multicolumn{5}{c}{Data} & \multicolumn{5}{|c}{Learning} & \multicolumn{4}{|c}{Verification} \\
Domain      & $\#\textrm{O}$ & $\#\textrm{L}$  & $|x'|$ & $|x|$   & $|x|\! +\! |z|$ & $|z|$ & $|x' \!\setminus\! z| $& $|z \!\setminus\! x'|$ & $\textrm{T}$ & $\#\textrm{O}_\textrm{V}$ & $\#\textrm{S}_\textrm{V}$\ & $\textrm{T}_\textrm{V}$ & $\%\textrm{V}$ \\
\midrule
blocks3     & $5$            & $250$           & $7$    & $5$     & $7$             &$2$    & $0$ & $0$ & $0.82 s$    & $6$  & $1200$ & $80.56 s$  & $100 \%$ \\
blocks4     & $5$            & $250$           & $6$    & $3$     & $6$             &$3$    & $0$ & $0$ & $1.43 s$    & $6$  & $1600$ & $105.04 s$ & $100 \%$ \\
delivery    & $16$           & $1000$          & $9$    & $5$     & $9$             &$4$    & $0$ & $0$ & $101.4 s$   & $24$ & $1200$ & $246.69 s$ & $100 \%$ \\
driverlog   & $63$           & $10000$         & $19$   & $10$    & $19$            &$9$    & $0$ & $0$ & $1695.32 s$ & $148$& $2400$ & $3757.22 s$& $100 \%$ \\
ferry       & $8$            & $100$           & $6$    & $2$     & $6$             &$4$    & $0$ & $0$ & $0.96 s$    & $10$ & $1200$ & $68.78 s$  & $100 \%$ \\
grid        & $50$           & $10000$         & $13$   & $4$     & $17$            &$13$   & $0$ & $4$ & $584.62 s$  & $48$ & $2000$ & $1264.47 s$& $100 \%$ \\
gripper     & $10$           & $500$           & $8$    & $3$     & $11$            &$8$    & $0$ & $3$ & $2.52 s$    & $12$ & $1000$ & $96.14 s$  & $100 \%$ \\
hanoi       & $8$            & $200$           & $3$    & $2$     & $3$             &$1$    & $0$ & $0$ & $1.5 s$     & $10$ & $400$  & $41.58 s$  & $100 \%$ \\
logistics   & $35$           & $1000$          & $13$   & $7$     & $20$            &$13$   & $0$ & $7$ & $28.25 s$   & $45$ & $1600$ & $377.5 s$  & $100 \%$ \\
miconic     & $9$            & $600$           & $8$    & $2$     & $8$             &$6$    & $0$ & $0$ & $2.78 s$    & $12$ & $1600$ & $104.14 s$ & $100 \%$ \\
$n$-puzzle  & $34$           & $1000$          & $16$   & $0$     & $16$            &$16$   & $0$ & $0$ & $498.28 s$  & $34$ & $1600$ & $1123.36 s$& $100 \%$ \\
$c$-puzzle  & $49$           & $500$           & $12$   & $0$     & $12$            &$12$   & $0$ & $0$ & $147.19 s$  & $49$ & $1600$ & $569.38 s$ & $100 \%$ \\
sokoban     & $30$           & $1000$          & $5$    & $2$     & $5$             &$3$    & $0$ & $0$ & $20.7 s$    & $30$ & $800$  & $72.45 s$  & $100 \%$ \\
sokopull    & $25$           & $600$           & $8$    & $3$     & $8$             &$5$    & $0$ & $0$ & $10.67 s$   & $30$ & $1200$ & $80.51 s$  & $100 \%$ \\
\midrule
blocks3$^-$ & $5$            & $250$           & $7$    & $5$     & $7$             &$2$    & $0$ & $0$ & $0.73 s$    & $6$  & $1200$ & $56.8 s$   & $100 \%$ \\
ferry$^-$   & $8$            & $100$           & $6$    & $2$     & $6$             &$4$    & $0$ & $0$ & $0.54 s$    & $10$ & $1200$ & $63.71 s$  & $100 \%$ \\
miconic$^-$ & $9$            & $600$           & $8$    & $2$     & $8$             &$6$    & $0$ & $0$ & $2.0 s$     & $12$ & $1600$ & $89.34 s$  & $100 \%$ \\
$c$-puzzle$^-$& $49$           & $500$           & $12$   & $0$     & $12$            &$12$   & $0$ & $0$ & $87.67 s$   & $49$ & $1600$ & $488.87 s$ & $100 \%$

\end{tabular}
    \caption{
        Table of results when learning a \stripsp domain $D_L$ from a trace of of length $\#\textrm{L}$ from  hidden domain $D$ derived from a  \strips   domain $D'$ with $\#\textrm{O}$ objects.
        $|x'|$ is the total number of action  arguments in $D'$, $|x|$ is the total  number of action  argument in $D$, $|z|$ is the number of learned implicit action arguments in $D_L$
        and $|x|\! +\! |z|$ is their sum.         $|x' \!\setminus\! z|$ is the number of action  arguments   in $D'$ not captured in  $D_L$,
        while $|z \!\setminus\! x'|$ is the number of implicit action arguments in $D_L$ that are not in $D$ (they are not incorrect, just not used). 
        $\textrm{T}$ is the total  time  to learn $D_L$, $\textrm{T}_\textrm{V}$ is the time  to verify $D_L$,  $\#\textrm{S}_\textrm{V}$ is the number of sampled state-action pairs in
        the  verification. All numbers are averages over 10 runs as traces are random. $\%\textrm{V}$ is the success rate of the verification.
        The rows in the  second block show the  experiments using incomplete  \strips states in the traces  where atoms involved selected predicates are  removed. 
    }
    \label{table:results}
\end{table*}

\section{Fine-grained analysis}

We provide an  analysis of the learned queries that  account for the missing  action arguments in the traces, 
that  become the  implicit action arguments in the learned domains.


\begin{itemize}
\item \textbf{Gripper:} The \strips actions are $move(r,r'),$ $grab(b, r, g)$ and $drop(b, r, g)$. The actions in the \stripsp traces
  are $move()$, $grab(b,g)$, and $drop(b)$. \synth learns the precondition queries that capture the omitted arguments which are involved
  in preconditions and effects. For example, the current and next rooms  in $move$  are captured by the learned queries $Q_1 = \{at(z_1)\}$ and $Q_2 = \{\neg at(z_2)\}$.
  These queries also capture the rooms for the actions $pick$ and $drop$.
  The missing gripper of the action \emph{drop}  is obtained by the  learned query $Q_3 = \{carry(b, z_3)\}$,
  and actually,  the other gripper is also identified through the  query  $Q_4 = \{\neg carry(b, z_4)\}$.
  One can see that for $pick$ and $drop$,  the referring expression for the ``room in which the robot is not located'' is derived,
  while  for $drop$, the expression for the ``gripper in which the ball is not held'' is derived. 
  These  referring expressions and their corresponding $z$ variables, however, are not used in  the learned preconditions or  effects.
  
\item \textbf{Sokoban:} The \strips actions are  $move(c_1, c_2)$ and  $push(c_1,c_2,c_3)$,
  which translate to the  \stripsp actions $move(c_2)$ and $push(c_2)$. 
						For both actions the location of the agent $c_1$ is  obtained by $z_1$ with  query $Q_1 = \{at(z_1)\}$, 
                                                while  $c_3$ is obtained by $z_2$ with query 
						$Q_2 = \{ adjacent_2(z_1, z_2), adjacent(c_2, z_2) \}$.
                                                
\item \textbf{c-Puzzle:} The \strips actions in  $c$-puzzle are $up(c_1,c_2,t)$, $down(c_1,c_2,t)$, $left(c_1,c_2,t)$, and $right(c_1,c_2,t)$.
  In \stripsp, the resulting  actions  take no  arguments. Focusing on the first action,  the position of the ``blank'' $c_2$ is
  picked up  by a  $z_1$ variable with query $Q_1 = \{\neg at(y_1,z_1)\}$, while   the next position of the ``blank'' $c_1$
  is picked up  by  $z_2$ with query $Q_2 = \{above(z_1, z_2)\}$,
  and the tile $t$ that is moved is picked up by $z_3$ with query      $Q_3 = \{ at(z_3, z_2)\}$.
\Omit{     the tile $t$ is picked by the $z_2$ variable with query 
       From this, for $up$ the position in which the blank is pushed is obtained from $Q_2 = \{above(z_2, z_1)\}$ and the tile in this cell with
       , similar for $left,$ $right,$ $down$.}
The predicate $blank$ is not used in a query
and also is  not needed in a precondition, since
for example $up$ is applicable, if there is a cell above the $blank$ which is the case iff $Q_2$ is satisfied.
						Therefore, the predicate $blank$ does not need to be contained in the states such that the domain can be learned.
                                                
\item \textbf{Blocks World 3:} The actions  in \strips are  $stack(b_1, b_2)$, $unstack(b_1, b_2)$, and $move(b_1,b_2,b_3)$. 
  In  \stripsp, these actions reduce to $stack(b_1, b_2)$, $unstack(b_1)$, and $move(b_1,b_3)$. 
  The argument $b_2$ is picked by a variable $z_1$ with   query $Q_1 = \{ on(b_1,z_1)\}$.
  In addition, the atoms  $on\_table(x)$, $clear(x)$ are not needed in the state traces, as they can be learned as  the negated preconditions
  of the form  $\neg on(x,y)$ and  $\neg on(y,x)$ respectively, that stand for   $\forall y. \neg on(x,y)$ and  $\forall y. \neg on(y,x)$.
\end{itemize}


\Omit{

\hector{Nikas/Jonas: Can you bring  the info below to this format to complete this list? Should be short and to the point.
  Then the info below  can be deleted (or kept for future appendix)}

\medskip

\subsubsection{Gripper}
In $gripper$ there are actions $move(r,r'),$ $grab(b, r, g)$ and $drop(b, r, g)$.
The \stripsp action schema for $move$ does not contain any argument, because the queries $Q_1 = \{at(z_1)\}$ and $Q_2 = \{\neg at(z_2)\}$
can infer both rooms of the domain.
Query $Q_1$ infers the room in which the robot currently is, and $Q_2$ obtains the other room.

In \stripsp the $grab(b,g)$ contains a ball and a gripper, since there may be multiple grabbing actions of different arms and balls possible in a state.
The queries $Q_1, Q_2$ are fulfilled also for this action, and therefore the room $r$ is obtained but also the room in which the robot is not located.
Because of this the action has an implicit action argument that is not contained in the \strips action schema.

The action $drop$ can be modeled in \strips either as $drop(b)$ or $drop(g)$.
The gripper $g$ can be derived by the query $Q_3 = \{ grabbed(z_3, b)\}$ for $grab(b)$, the ball can be derived by the query $Q_3 = \{ grabbed(g, z_3)\}$ for $grab(g)$.
If the ball $b$ is contained in the arguments, either explicit or implicit, the query $Q_4 = \{ \neg grabbed(z_4, b)\}$ is uniquely satisfiable.
This query is true for every gripper, the ball is not contained in, and therefore unique since there are exactly $2$ grippers in this domain.
The queries $Q_1, Q_2$ are also for this action uniquely fulfilled and the learned action schema with explicit and implicit arguments is $drop(b, r, g, r_2, g_2)$ where $r_2$ is the room in which the robot is not located and $g_2$ the gripper in which the ball $b$ is not held.
Since all arguments of the \strips action are also contained in the \stripsp action either as explicit or implicit argument, all precondition of the \strips domain are also contained in the \stripsp domain $\mathcal{D}_L$.

\subsubsection{Sokoban}

Consider the $npuzzle$ that is defined over $cells$ which are connected via predicates $adjacent$ and $adjacent_2$, and has actions $move(c_1,c_2)$ and $push(c_1,c_2,c_3)$.
In \stripsp $move(c_2)$ is sufficient, since the agents location can be derived by the query $Q_1 = \{at\_agent(z_1)\}$.
It is not possible to model the action as $move(c_1)$, since from this position the agent may be able to move in different directions.

The action $push(c_1, c_2,c_3)$ can be described in \stripsp either as $push(c_2)$ or $push(c_3)$.
As before the current location of the agent can be derived by query $Q_1$.
If $c_2$ is given explicitly, $c_3$ can be derived by query $Q_2 = \{ adjacent_2(z_1, z_2), adjacent(c_2, z_2) \}$.
This is because, if the agents location is known, the box can only be pushed into one direction.
If on the other hand $c_3$ is given, $c_2$ can be derived by the query $Q_2 = \{adjacent(c_3, z_2), adjacent(z_1, z_2) \}$.
This is because, if the agents location is known, and also the position in which the box will be pushed, there can only be one position, from where the box is pushed.

As for $gripper$, all arguments of the \strips action schema are contained in the implicit or explicit arguments of the \stripsp action, and therefore all \strips preconditions will also be part of the \stripsp domain.
This ensures that an action is only applicable in the \stripsp domain if it is also applicable in the \strips domain. 

(sokoban problems)

\subsubsection{Blocks World}

In the blocks world domain without a gripper there are actions $stack(b_1, b_2)$, $unstack(b_1,b_2)$ and $move(b_1,b_2,b_3)$.
The action $stack$, which stacks a block that is on the table onto some block, must contain both arguments as explicit arguments in \stripsp.
If this is not the case, the action will be ambiguous, since there can be multiple blocks that are on the table, and multiple blocks on which these blocks can be stacked.
Therefore, these arguments can not be recovered by unique queries.

The action $unstack(b_1,b_2)$ must contain in \stripsp either of the blocks, since for $b_1,b_2$ the atom $on(b_1, b_2)$ must be true.
Therefore, if $b_1$ is contained in the input, with the query $Q_1 = \{on(b1, z_1)\}$ the block $b_2$ can be inferred, if the block $b_2$ is contained, the query $Q_1 = \{on(z_1, b_2)\}$ derives $b_1$.
These queries are unique, since each block can be stacked onto at most one other block.

The action $move$ must contain the argument $b_3$, since there may be multiple blocks on which the block can be moved.
Also, one of the blocks $b1,b2$ must be contained in the input, and from this the other block can be inferred by the $Q_1$ used for $unstack$, since $b_1$ has to be stacked onto $b_2$.

We can see above, that all arguments are derived using the predicate $on$.
To show that $on\_table$ and $clear$ are not needed to learn the domain, we need to show that preconditions on these predicates can be stated in  $\mathcal{D}_L$ with having the predicates explicitly.
A precondition on $on\_table(x_1)$ can be replaced by the precondition $\neg on(x_1, y_i)$, since a block is on the table if it is not stacked on some block.
Similar, a precondition $clear(x_1)$ can be replaced by $\neg on(y_i, x_1)$, since each block is clear if there is no block above it.
Since now, all arguments can be recovered and any precondition of $\mathcal{D}$ can also be stated in $\mathcal{D}_L$, the same actions are applicable in the domains, even if the states do not contain the predicates $clear$ and $on\_table$.

\subsubsection{Npuzzle}
Consider the $npuzzle$ that is defined with $cells$ and has actions $up(c_1,c_2,tile),$ $ left(\ldots),$ $ right(\ldots),$ $ down(\ldots)$.
This domain can be modeled in \stripsp without any explicit action arguments and without the predicate $blank$.
For $up$, the cell $c_1$, in which the blank is located, can be inferred by the query $Q_1 = \{ \neg at(z_1,y)\}$, since the only cell in which there is no tile, is the cell of the blank.
For the action $up$, the cell in which the blank is pushed, can be derived by the query $Q_2 = \{above(z_2, z_1)\}$.
Having the cell in which the blank is pushed ($z_2$) the tile in this cell can be derived by the query $Q_3 = \{at(z_2, z_3)\}$.
Therefore, all arguments of $up$ can be derived, the same can be shown for $left, right, down$ similarly.
To show that the action $up$ is applicable $\mathcal{D}_L$ iff it is applicable $\mathcal{D}$ we first notice, that $up$ is not applicable in $\mathcal{D}$ if there is no cell above the position of the blank.
If this is the case, the query $Q_2$ is not applicable, and therefore the queries of the action $up$ are not satisfiable and the action is not applicable.
Similar for $left, right, down$.

In this domain one can also see why 'static' $z$-variables are not added to the argument lists.
The query $Q_c = \{ \neg above(y_1,z_c), \neg left(y_2,z_c)\}$ is fulfillable in every state and the $z$ variable refers to the top left cell.
This information is not related to the action but to the instance, and should therefore not be added to the action arguments.

\Omit{

\subsubsection{Gripper}
In $gripper$ there are actions $move(r,r'),$ $grab(b, r, g)$ and $drop(b, r, g)$.
The \stripsp action schema for $move$ does not contain any argument, because the queries $Q_1 = \{at(z_1)\}$ and $Q_2 = \{\neg at(z_2)\}$
can infer both rooms of the domain.
Query $Q_1$ infers the room in which the robot currently is, and $Q_2$ obtains the other room.

In \stripsp the $grab(b,g)$ contains a ball and a gripper, since there may be multiple grabbing actions of different arms and balls possible in a state.
The queries $Q_1, Q_2$ are fulfilled also for this action, and therefore the room $r$ is obtained but also the room in which the robot is not located.
Because of this the action has an implicit action argument that is not contained in the \strips action schema.

The action $drop$ can be modeled in \strips either as $drop(b)$ or $drop(g)$.
The gripper $g$ can be derived by the query $Q_3 = \{ grabbed(z_3, b)\}$ for $grab(b)$, the ball can be derived by the query $Q_3 = \{ grabbed(g, z_3)\}$ for $grab(g)$.
If the ball $b$ is contained in the arguments, either explicit or implicit, the query $Q_4 = \{ \neg grabbed(z_4, b)\}$ is uniquely satisfiable.
This query is true for every gripper, the ball is not contained in, and therefore unique since there are exactly $2$ grippers in this domain.
The queries $Q_1, Q_2$ are also for this action uniquely fulfilled and the learned action schema with explicit and implicit arguments is $drop(b, r, g, r_2, g_2)$ where $r_2$ is the room in which the robot is not located and $g_2$ the gripper in which the ball $b$ is not held.
Since all arguments of the \strips action are also contained in the \stripsp action either as explicit or implicit argument, all precondition of the \strips domain are also contained in the \stripsp domain $\mathcal{D}_L$.

\subsubsection{Sokoban}

Consider the $npuzzle$ that is defined over $cells$ which are connected via predicates $adjacent$ and $adjacent_2$, and has actions $move(c_1,c_2)$ and $push(c_1,c_2,c_3)$.
In \stripsp $move(c_2)$ is sufficient, since the agents location can be derived by the query $Q_1 = \{at\_agent(z_1)\}$.
It is not possible to model the action as $move(c_1)$, since from this position the agent may be able to move in different directions.

The action $push(c_1, c_2,c_3)$ can be described in \stripsp either as $push(c_2)$ or $push(c_3)$.
As before the current location of the agent can be derived by query $Q_1$.
If $c_2$ is given explicitly, $c_3$ can be derived by query $Q_2 = \{ adjacent_2(z_1, z_2), adjacent(c_2, z_2) \}$.
This is because, if the agents location is known, the box can only be pushed into one direction.
If on the other hand $c_3$ is given, $c_2$ can be derived by the query $Q_2 = \{adjacent(c_3, z_2), adjacent(z_1, z_2) \}$.
This is because, if the agents location is known, and also the position in which the box will be pushed, there can only be one position, from where the box is pushed.

As for $gripper$, all arguments of the \strips action schema are contained in the implicit or explicit arguments of the \stripsp action, and therefore all \strips preconditions will also be part of the \stripsp domain.
This ensures that an action is only applicable in the \stripsp domain if it is also applicable in the \strips domain. 

(sokoban problems)

\subsubsection{Blocks World}

In the blocks world domain without a gripper there are actions $stack(b_1, b_2)$, $unstack(b_1,b_2)$ and $move(b_1,b_2,b_3)$.
The action $stack$, which stacks a block that is on the table onto some block, must contain both arguments as explicit arguments in \stripsp.
If this is not the case, the action will be ambiguous, since there can be multiple blocks that are on the table, and multiple blocks on which these blocks can be stacked.
Therefore, these arguments can not be recovered by unique queries.

The action $unstack(b_1,b_2)$ must contain in \stripsp either of the blocks, since for $b_1,b_2$ the atom $on(b_1, b_2)$ must be true.
Therefore, if $b_1$ is contained in the input, with the query $Q_1 = \{on(b1, z_1)\}$ the block $b_2$ can be inferred, if the block $b_2$ is contained, the query $Q_1 = \{on(z_1, b_2)\}$ derives $b_1$.
These queries are unique, since each block can be stacked onto at most one other block.

The action $move$ must contain the argument $b_3$, since there may be multiple blocks on which the block can be moved.
Also, one of the blocks $b1,b2$ must be contained in the input, and from this the other block can be inferred by the $Q_1$ used for $unstack$, since $b_1$ has to be stacked onto $b_2$.

We can see above, that all arguments are derived using the predicate $on$.
To show that $on\_table$ and $clear$ are not needed to learn the domain, we need to show that preconditions on these predicates can be stated in  $\mathcal{D}_L$ with having the predicates explicitly.
A precondition on $on\_table(x_1)$ can be replaced by the precondition $\neg on(x_1, y_i)$, since a block is on the table if it is not stacked on some block.
Similar, a precondition $clear(x_1)$ can be replaced by $\neg on(y_i, x_1)$, since each block is clear if there is no block above it.
Since now, all arguments can be recovered and any precondition of $\mathcal{D}$ can also be stated in $\mathcal{D}_L$, the same actions are applicable in the domains, even if the states do not contain the predicates $clear$ and $on\_table$.

\subsubsection{Npuzzle}
Consider the $npuzzle$ that is defined with $cells$ and has actions $up(c_1,c_2,tile),$ $ left(\ldots),$ $ right(\ldots),$ $ down(\ldots)$.
This domain can be modeled in \stripsp without any explicit action arguments and without the predicate $blank$.
For $up$, the cell $c_1$, in which the blank is located, can be inferred by the query $Q_1 = \{ \neg at(z_1,y)\}$, since the only cell in which there is no tile, is the cell of the blank.
For the action $up$, the cell in which the blank is pushed, can be derived by the query $Q_2 = \{above(z_2, z_1)\}$.
Having the cell in which the blank is pushed ($z_2$) the tile in this cell can be derived by the query $Q_3 = \{at(z_2, z_3)\}$.
Therefore, all arguments of $up$ can be derived, the same can be shown for $left, right, down$ similarly.
To show that the action $up$ is applicable $\mathcal{D}_L$ iff it is applicable $\mathcal{D}$ we first notice, that $up$ is not applicable in $\mathcal{D}$ if there is no cell above the position of the blank.
If this is the case, the query $Q_2$ is not applicable, and therefore the queries of the action $up$ are not satisfiable and the action is not applicable.
Similar for $left, right, down$.

In this domain one can also see why 'static' $z$-variables are not added to the argument lists.
The query $Q_c = \{ \neg above(y_1,z_c), \neg left(y_2,z_c)\}$ is fulfillable in every state and the $z$ variable refers to the top left cell.
This information is not related to the action but to the instance, and should therefore not be added to the action arguments.

}

\subsection{Results - From Niklas. I've seen this late. Check}

The experiments were run on  Intel(R) Xeon(R) Platinum 8160 CPU's running at 2.10GHz.
The results can be seen in table 1, which is structured as follows.
In the first columns the domain, the number of objects in the instance used for learning ($\# O$), the length of the trace used for learning ($\# L$) , and the number of action arguments in the domain ($\# DA$) are given.
For the learning part the number of action arguments in the \stripsp action schema ($\# IA$) and the number of explicit and implicit action arguments of the learned \stripsp domain $(\# LA)$ are given.
The coloumn $\# MA$ refers to arguments contained in the \strips action schema but not in the learned \stripsp action schema and $\# AA$ refers to the arguments learned, that are not contained in the \strips domain.
For validation the number (state, action) - pairs $(size_V)$, the number of objects in the verification instance $(\# O_v)$, the time of the verification $(time_V)$ and the percentage of sucessfully verified domains is given.
In the upper part of the table the results for experiments are given when all predicates were contained in the states of the traces, and in the lower part there are domains for which some predicates were missing.
The missing predicates for the domains were for npuzzle $blank$, for blocks $on\_table$ and $clear$, for miconic $in\_lift$ and for ferry $on$.

We can see in table 1 that all arguments of the \strips domain, are either contained in the input action schema or as implicit arguments, since there are no missing argument ($\# MA$).
In some domains there were arguments derived, that are not contained in the initial domain, but the domain still verified.
We will give an example of this later.
Learning the domain took from 1 second up to half an hour.

\Omit{

(Maybe do example why npuzzle 3x3 will not work out...){\color{blue}
	coordinate npuzzle 3x3 does not work due to early uniqueness by special case exclusion as there are only 3 options per coordinate.
	Assume the following query for the action move-right were the position of the blank $b_x,b_y$ was already deduced:
	$\{at[\_,?,b_y], inc[?,\_]\}$
	We know a move-right is never possible if the blank is on the right most x-lane, thus it must be on the left or middle lane.
	The pattern $inc[?,\_]$ selects a lane with a lane on the right, the pattern $at[\_,?,b_y]$ requests it to not contain the blank.
	Together they exclude all but exactly one option, this leads to the necessity to select instances large enough that excluding all special cases does not suffice to reach uniqueness.
}

\subsubsection{Blocks world 4 actions}

For the blocks world domain with 4 action there can be two arguments left out.
This is because, all blocks can be inferred by the location in which they are currently placed, either on the table or on some block.
The only block for which this is not possible is the block currently grabbed.
Therefore, this property can be followed and therefor the action put has no argument, and the action stack only the block on which the block will be stacked.
This argument can not be derived, since when holding a block, this block can potentially be stacked onto different blocks.
When we know the effects, this will not be necessary since it will be contained in the effects.

For the actions pick and stack it needs to be given which block will be unstacked.
For pick, this needs to be explicit, since there may be multiple blocks on the table.
For the action unstack this can be either the block, which will be grabbed, or the block on which it is currently stacked.
This is because on this block there can only be one block and therefore the block that is unstack can be inferred by the query ... .
}

\Omit{
\begin{table*}[ht] 
	\centering
\begin{tabular}{lrrr|llllr|rllr} 
	Domain  & Obj & Size & DA& IA & LA & MA & AA & time & V & ObjV & sizeV & timeV \\ \hline
dropped\_blocks3 & $5$ & $250$ & $7$ & $5$ & $7$ & $0$ & $0$ & $0.73 s$ & $100 \%$ & $6$ & $1200$ & $56.8 s$ \\ 
dropped\_ferry & $8$ & $100$ & $6$ & $2$ & $6$ & $0$ & $0$ & $0.54 s$ & $100 \%$ & $10$ & $1200$ & $63.71 s$ \\ 
dropped\_miconic & $9$ & $600$ & $8$ & $2$ & $8$ & $0$ & $0$ & $2.0 s$ & $100 \%$ & $12$ & $1600$ & $89.34 s$ \\ 
dropped\_cpuzzle & $49$ & $500$ & $12$ & $0$ & $12$ & $0$ & $0$ & $87.67 s$ & $100 \%$ & $49$ & $1600$ & $488.87 s$ 
	\end{tabular}
	\caption{
		Table of results when learning a \stripsp domain $D_L$ from an trace with non \strips states of length $\# L$ from an instance of a strips domain $\mathcal{D}$ with $\# Obj$ objects.
		$\# DA$ is the number of arguments positions in $\mathcal{D}$, $\# IA$ argument position after reducing the action schema, $\# LA$ number of explicit and
                derived argument positions in $\mathcal{D}_L$, $\# MA$ number of argument position in $\mathcal{D}$ but not $\mathcal{D}_L$, $\# AA$ number of argument positions  in $\mathcal{D}_L$ but not in $\mathcal{D}$. 
		$time$ is the time needed to learn $\mathcal{D}_L$, $time_V$ is the time needed to verify $\mathcal{D}_L$,  $size_V$ is the number of (state, action)-pairs used for verification. All numbers are averages over 10 runs, $V$ is the success rate of the validation.}
\end{table*}}

\Omit{
  
\subsubsection{Gripper}{\color{blue}
The additional arguments for gripper are introduced by the limitation to two rooms and two grippers in the standard instances.
It is easy to refer to the other room/gripper if there are exactly two of them.
Referring the other gripper only works on the drop action as the gripper is identified as that one not holding you current ball.}

(Example for additional arguments)

\subsection{Analysis 2}

Previous, we assumed that an learned domain is valid, when all states the applicable actions are the same as in the initial domain.
One could also verify the learned domain based on the added action arguments.
If the inital arguments x together with the derived arguments z, are the same arguments as the initial Strips action arguments, it may not be neccessary to check whether all actions are applicable.
This is because the labels of the action schema are known and therefore with SIFT there can be an action schema learned.
In this action schema, if the trace is rich enough, the same actions are aplicable in a state.

This is especially interesting, since for this definition  there can be may more arguments be removed as when checking if the right actions are applicable.
This is because only the arguments need to be infered.
An example can be seen in the blocks world doamain with 3 actions.
Here the action arugments can be recovered only using the predicate $on$.
When checking, whether an action is applicable, the other predicates $on_table$ and $clear$ are necessary.
This is because we check only for unique queries, and when picking a block from the table, there is no unique query the the block is not stacked onto a different block.
This would refer to not on for a specific z variable.

}

\Omit{

The input for the experimental evaluation of the learning algorithm is a \strips problem $P= \langle \mathcal{D,I}\rangle$.
From this problem an action sequence $s = s_1,a_1,s_2,...,s_n,a_n$ is sampled where action $a_i$ is applied in state $s_i$ and reaches state $s_{i+1}$.
This trace will be used to obtain an \stripsp action schema.
From the same instance a second action sequence $s' = s'_1,a'_1,s'_2,...,s'_n,a'_n$ is sampled.
The actions $a'_i$ in this trace are \stripsp actions which are obtained by dropping arguments from \strips actions, that are not contained in the \stripsp action schema.
From this trace a domain $\mathcal{D}_L$ will be learned with \synth algorithm.
For validation we pick a different instance $P_v=\langle \mathcal{D}, \mathcal{I}_v \rangle$ and sample a third sequence.
We test against positive and negative examples to ensure the learned domain allows exactly all actions of the original domain.
For negative examples actions of this sequence are moved to states where they should not be applicable.
In the following we will explain the experimental setup, show our results, and analyze domains learned by \synth.

\subsection{Reduced Action Schema}
To obtain an \stripsp action schema based on a \strips domain, it will be tested which arguments can be recovered by queries that are build by the preconditions of the \strips action.
For all precondition $p(x')$ of $a(x)$ atom patterns are obtained by replacing one position with a $z$-variable and arbitrary positions with $y$-variables.
A unique query $Q_j$ of these atom patterns recovers an argument $x_i$ of $a(x)$, iff $z_j = x_i$ for all states in which $a(x)$ is applied.
The assumption is that the query will in any state obtain the object $x_i$ of the action $a(x)$, not only in the action sequence $s$.

A query $q$ used to recover an argument $x_i$ contains a set of atom patterns $p_1,...,p_n$.
These atom patterns contain a set of argument positions that must be contained in the arguments of the \stripsp action such that the query can be fulfilled.
For example, for the query $Q_i = \{ p[\_,z_i], p'[z_i, \_]\}$ there must be no arguments given such that $Q_i$ is satisfiable, and the $z$-variable can be recovered.
For $Q_j = \{ p[\_,z_j,x_2], p'[x_3, z_j] \}$, the arguments $x_2, x_3$ of the action $a(x)$ must be known, such that the query is satisfiable.
Therefore, these arguments must be contained in the input or must be derived by some other query, such that $Q_j$ can be satisfied and the argument recovered.
A minimal subset $x' \subseteq x$ for $a(x)$ is obtained such that all arguments $x$ can be derived from $x'$ using the queries defined above.
Obtaining a minimal set $x'$ is an optimization task, but since actions usually have an arity of 4 or smaller, this is implemented in a greedy way.
The subset $x'$ will in the following be referred as \stripsp action schema.

\subsection{Learning}

The input for \synth is a trace $s'$, where the action labels of $a(x)$ in $s'$ only contain the arguments that are contained in the \stripsp action schema obtained above.	
The learning algorithm is applied and will learn a domain $\mathcal{D}_L$.

\subsection{Validation/Verification}
The learned domain $\mathcal{D}_L$ is validated on a problem $P_v = \langle \mathcal{D,I}_L \rangle$.
From this problem there are $20,000$ states sampled in a breath first search manner starting from the initial state.
For each \stripsp action $a(x)$ there are $200$ positive and negative (state, action) - pairs sampled from these states.
A positive (state, action) - pair $(s, a(x))^+$ is a tuple of the \stripsp action $a(x)$ and a state $s$ in which $a(x)$ is applicable.
The action $a(x)$ is applicable in $s$, if there is a \strips action $a(\bar{x})$ applicable in $s$ such that $x$ can be obtained from $\bar{x}$ by removing the arguments that are not contained in the \stripsp action schema. 
Similar, for a negative (state, action) - pair $(s, a(x))^-$, $a(x)$ is not applicable in $s$.

The learned domain is valid if every $(s,a(x))^+$ is applicable in $\mathcal{D}_L$ and every negative $(s,a(x))^-$ is not applicable.
Applicable means that all queries $Q_i$ of $a(x)$ in the $\mathcal{D}_L$ are uniquely fulfilled, and therefore the $z$-variables can be obtained.
Also, all additional preconditions of $a(x)$ need to be fulfilled such that an action is applicable.
If any of the two conditions is not met for $a(x)$ in a state $s$, the action $a(x)$ is not applicable in $s$.
For positive (state, action)-pairs there will be an additional check whether the action $a(x)$ in the \stripsp domain has the same effects as the \strips action $a(\bar{x})$ in $\mathcal{D}$.
If the actions have different effects the validation will fail.

\subsection{Learning on Incomplete States}

It is possible to learn a \stripsp domain from incomplete \strips states. 
The learning task stays the same, but the states in the traces do not contain all predicates from the \strips domain.
The difficulty is then on the one hand that all arguments need to be recovered only using these predicates.
On the other hand, in the learned \stripsp domain the same actions need to be applicable as in the \strips domain.
Applicability is important since an action may have a precondition on a predicate that is not contained in the states.
We will later show for some domains which arguments can be removed and why the right actions are applicable, even if some predicates are not contained in the states.

\Omit{
\section{Negation}

Until now, only positive atoms were used in the queries and additional preconditions.
Negation will be used to find atoms, such that there is no grounding for $x,y$ such that the predicate is true.
That means logically we will search for $z$ such that $p(x_j, z_i, y_k)$ is false for any grounding of the $y$ variables.
When previously the atom was fulfilled for a $z$ variable if $\exists y $ such that $p(x,y,z)$ is true there is now $\neg \exists y$ such that $p(x,y,z)$ is true.
This is logically equivalent to check whether $\forall y \neg p(x,y,z)$.

A problem is that predicates in a domain may be typed, and from this follows that there predicate can not be true for some groundings of the predicate.
From this follows that a $z$ variable may not be unique, and therefore also the query, since there are objects for which the predicate can never be true.
An example can be the predicate $p(x,y)$ where the first and the second position of the predicate have type $t_1$.
If we have the query $p(z_i, y_j)$ the query will be true for any object of $o$ with $type(o) = t_1$ if $\forall y \neg p(o, y)$.
The query will also be true for any $o'$ such that $type(o') \not= t_1$, since the predicate can never be true for any $y$ since $o'$ can never be true for the predicate in the first position.
This problem can be overcome, when only $z$-variables are considered, for which the predicate can be true in the first place.
For this, it will only be checked for grounding $\bar{o}$ of the predicate $p$, that can be true in the domain.

}

}

}

\section{Conclusions}


The problem of learning lifted \strips model from action traces alone
has been recently solved by the \sift algorithm, a follow up to 
\textsc{locm}. The limitation is that the actions in the traces
must come from a hidden \strips domain and include all the arguments. 
This means for example that to represent the action of unstacking 
a block $x$ in a trace, the location of the block $x$ must be
conveyed as an extra argument. In this work, we addressed a new
variant of the model learning problem which is  more realistic,
and closer to the settings used in model-based reinforcement learning
where actions reveal a minimal number of arguments. The problem is formulated
and solved as the task of  learning lifted \stripsp models
from \stripsp state-action traces. The resulting algorithm
has a broad and crisp scope, where it is sound, complete, and
scalable, as illustrated through the experiments. 

One question that arises from this work is whether the
proposed methods can be used to learn, for example,
the deterministic dynamics of Atari-like video games. In this setting,
actions have no explicit arguments and  states are represented by
the colors of the cells in a grid (pixels). One difference
to learning the dynamics of the sliding-tile puzzles from
grids (atoms $at(t,c)$)  is that the objects in Atari
can take up many cells. This seems to call for representation
languages that are richer than \stripsp, able to accommodate
definitions (axioms) and partial observability, as 
one does not observe objects directly but cell colors.

\section*{Acknowledgments}

The research has been supported by the Alexander von Humboldt Foundation with funds from the German Federal Ministry for Education and Research,
by the European Research Council (ERC) under the European Union's Horizon 2020 research and innovations programme (Grant agreement No. 885107),
and by  the Excellence Strategy of the Federal Government and the NRW State.



\bibliographystyle{kr}
\bibliography{control}

\end{document}